\theoremstyle{plain}
\newtheorem{theorem}{Theorem}[section]
\newtheorem{lemma}[theorem]{Lemma}
\theoremstyle{definition}
\newtheorem{definition}[theorem]{Definition}
\theoremstyle{remark}
\definecolor{myColor}{rgb}{0.8039,0,0}   
\newcommand*{\new}{\@ifnextchar\bgroup{\new@}{\color{blue}}}
\newcommand*{\new@}[1]{{\textcolor{blue}{#1}}}
\title{Learning from Random Demonstrations: Offline Reinforcement Learning with Importance-Sampled Diffusion Models}
\author{%
  Zeyu Fang \\
  George Washington University\\
  \texttt{joey.fang@gwu.edu} \\
  \And
  Tian Lan \\
  George Washington University\\
  \texttt{tlan@gwu.edu} \\
}
\begin{document}

\maketitle

\begin{abstract}
Generative models such as diffusion have been employed as world models in offline reinforcement learning to generate synthetic data for more effective learning. Existing work either generates diffusion models one-time prior to training or requires additional interaction data to update it. In this paper, we propose a novel approach for offline reinforcement learning with closed-loop policy evaluation and world-model adaptation. It iteratively leverages a guided diffusion world model to directly evaluate the offline target policy with actions drawn from it, and then performs an importance-sampled world model update to adaptively align the world model with the updated policy. We analyzed the performance of the proposed method and provided an upper bound on the return gap between our method and the real environment under an optimal policy. The result sheds light on various factors affecting learning performance. Evaluations in the D4RL environment show significant improvement over state-of-the-art baselines, especially when only random or medium-expertise demonstrations are available -- thus requiring improved alignment between the world model and offline policy evaluation.

\end{abstract}

\section{Introduction}

Offline Reinforcement Learning (RL) methods have received much recent attention~\cite{levine2020offline, prudencio2023survey}, due to their abilities to train policies based on offline datasets (also known as demonstrations) that are collected using a behavior policy, rather than through expensive (and sometimes dangerous) online interactions~\cite{kiran2021deep}. The available datasets can only include limited and fixed transitions/trajectories. Thus, as the learned policy gradually deviates from the behavior policy used for collecting data, the learned policy and the estimated value function would 
overestimate out-of-distribution actions in unseen dynamics. This is known as the distribution shift~\cite{kumar2019stabilizing}, one of the key challenges for offline RL.

To this end, solutions have been proposed to augment the offline training dataset by building world models~\cite{yu2021combo,rigter2022rambo, matsushima2020deployment}, besides the approach of policy regularization~\cite{kumar2019stabilizing, rashidinejad2021bridging}. The idea is to learn a synthetic model, known as the world model, representing the transition dynamics of the underlying Markov Decision Process (MDP). Once trained with the offline dataset, the world model can be leveraged to generate additional synthetic trajectories for offline RL. Existing work include world models generated by VAE~\cite{ha2018world, hafner2023mastering, ozair2021vector}, GAN~\cite{eysenbach2022mismatched}, Transformers~\cite{janner2021offline}, and more recently Diffusion~\cite{Ding2024DiffusionWM, lu2023synthetic}. However, most existing work either obtain a world model one-time prior to policy training~\cite{yu2020mopo, kidambi2020morel, janner2019trust} or require additional online interaction data to adapt the world model~\cite{kaiser2019model, hafner2019dream}, neither effectively mitigating distribution shift with only offline data. Further, the return gap between world models (e.g., diffusion models) and the real environment of such offline RL algorithms remains to be analyzed.

\begin{figure}\label{fig:intro}
    \centering
    \includegraphics[width=0.8\columnwidth]{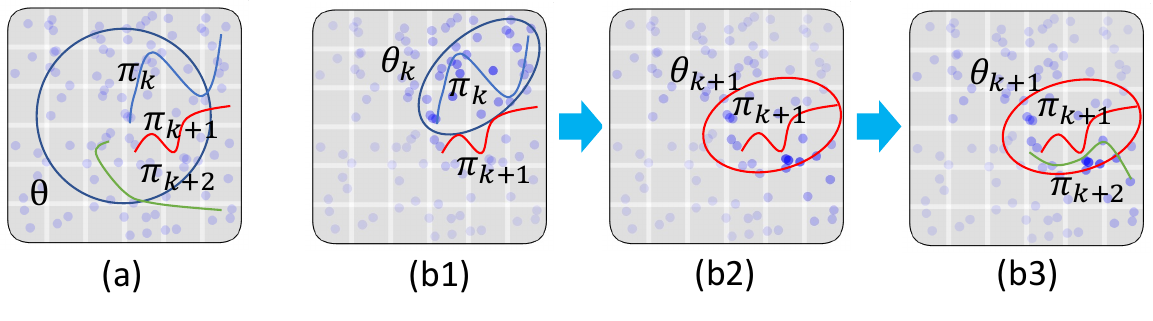}
    \caption{The existing algorithms as shown in (a) use a fixed world model in offline RL (or require additional data from the environment to update). Our proposed approach, as illustrated in (b1) to (b3), adapts the diffusion model from $\theta_k$ to $\theta_{k+1}$ with importance-sampling regarding the updated policy $\pi_{k+1}$, and then evaluates the current policy $\pi_{k+1}$ with guided diffusion to obtain $\pi_{k+2}$. Each point refers to a sample in the offline dataset, while darker points are given higher weight in loss calculation for diffusion world model update. The performance of the proposed method is analyzed in this paper.}
    \label{fig:enter-label}
\end{figure}

This paper proposes a novel approach for offline RL with closed-loop policy evaluation and world-model adaptation. Our approach encompasses two collaborative components: (i) A guided diffusion world model to directly evaluate the target policy by generating synthetic trajectories with actions drawn from it; and (ii) An importance-sampled world model update to align the world model with the target policy. The two components work in a closed-loop operation throughout training, as illustrated in Figure~\ref{fig:intro}. Existing work often considers diffusion world models as a data synthesizer~\cite{lu2023synthetic} or planner~\cite{janner2022planning, Ding2024DiffusionWM}. It generates additional synthetic trajectories, following data collected using a behavior policy, to support policy updates as shown in Figure~\ref{fig:intro}(a). Our proposed approach, on the other hand, continually adapts the diffusion model using importance sampling with respect to the distribution shift between the current target policy $\pi_k$ and the behavior policy $\pi_b$. The updated diffusion model is then used to evaluate $\pi_t$ for policy improvement, with a sequence of actions drawn from $\pi_k$ used as guidance/input to the world model. We note that this guided policy evaluation requires iterating between next-state generation using the guided diffusion world model and next-action sampling from current target policy. It is different from previous work using diffusion on trajectory synthesize ~\cite{lu2023synthetic}, planning ~\cite{janner2022planning, Ding2024DiffusionWM}, or policy representation~\cite{wang2022diffusion, chen2022offline} in RL. 

We analyze the performance of the proposed learning algorithm, namely Adaptive Diffusion World-Model for Policy Evaluation (ADEPT). It addresses the distribution shift problem in offline RL, especially when only random and medium-expertise demonstrations are available. We provide the bound of value discrepancy between actual environment and our diffusion model under the same policy, and show that the monotonic improvement can be guaranteed when the one-step policy update under the model is larger than this bound. We further decompose the bound into three factors: state transition error, reward prediction error and policy shift, and discuss how ADEPT lowers these factors to narrow the bound. To our best of knowledge, this is the first analysis for offline RL with diffusion world models. We note that ADEPT works best with datasets consisting of mainly random and medium demonstrations, where the distribution shift becomes more severe as target policy moves toward optimum. The proposed algorithm is evaluated on D4RL benchmark \cite{fu2020d4rl} in three MuJoCo environments. The results show that ADEPT improves the baseline IQL method \cite{kostrikov2021offline} with an average of 211.8\% on random, 15.6\% on medium and 19.4\% on medium-replay, and outperforms other SOTA offline RL methods including diffusion-based algorithms.

The contributions of this work can be summarized as follows:
\begin{enumerate}
    \item We propose an offline RL algorithm, ADEPT, for a closed-loop operation of policy evaluation with guided diffusion and world-model adaptation with importance sampling. 
    \item We provide theoretical proof of bounding the return gap between ADEPT (using offline RL and diffusion world models) and actual environment under an optimal policy. 
    \item We evaluate our method on the D4RL benchmarks and demonstrate significant improvement over state-of-the-art baselines, especially on random and medium datasets.
\end{enumerate}

\section{Related Works}


\paragraph{Offline RL.} Offline RL faces the distribution shift problem~\cite{kumar2020conservative} due to data collected using a specific behavior policy.
Various methods have been proposed to regularize an offline RL policy and address this issue. In particular,
MOReL \cite{kidambi2020morel}, MOPO \cite{yu2020mopo}, VI-LCB \cite{rashidinejad2021bridging} and COMBO \cite{yu2021combo} develop different ways to penalize reward or value function in unseen state and action pairs to address the out-of-distribution issues. Other model-free methods, including BCQ~\cite{fujimoto2019off}, IQL~\cite{kostrikov2021offline}, CQL~\cite{kumar2020conservative}, and TD3+BC~\cite{fujimoto2021minimalist}, develop different conservatism mechanisms that are defined on action or value function, forcing the policy to act more conservatively. Our proposed ADEPT framework can be combined with any of these offline RL algorithms.

\paragraph{World Models for Offline RL.}
The use of world models to generate synthetic data for offline RL 
was first proposed in \cite{ha2018world}, utilizing VAE and RNN for predicting state transitions. Following this approach, various world models with advanced capabilities of fitting desired distributions have been proposed, including convolutional U-networks \cite{kaiser2019model}, vector-quantized autoencoders \cite{ozair2021vector}, generative adversarial networks \cite{eysenbach2022mismatched}, energy-based models \cite{boney2020regularizing}, Transformers \cite{janner2021offline}, and diffusion~\cite{Ding2024DiffusionWM, lu2023synthetic}. They mainly use world models for trajectory synthesis, with limited adaptability of the world models (except for using additional online data~\cite{lu2023synthetic}).

\paragraph{The Use of Diffusion Models in RL.}
Diffusion is a state-of-the-art technique for 
generating synthetic samples of images and text data \cite{ho2020denoising}. It was first introduced as planner or policy representation in offline RL~\cite{janner2022planning},
where the diffusion model directly generates trajectories that are used for execution.
This is further extended to conditional actions~\cite{ajay2022conditional}, meta-RL~\cite{ni2023metadiffuser}, hierarchical tasks~\cite{li2023hierarchical}, multi-task problems~\cite{he2023diffusion}, multi-agent tasks~\cite{zhu2023madiff} and safe planning~\cite{xiao2023safediffuser}.
Diffusion models are also employed for policy expression~\cite{wang2022diffusion, chen2022offline}, imitation learning~\cite{hegde2024generating}
and reward modeling~\cite{nuti2023extracting}. Diffusion models are adopted as a data synthesizer to generate additional synthetic data based on offline datasets before policy training~\cite{lu2023synthetic}. Later, a conditional diffusion world model was proposed to generate trajectories from current state and action, to support offline value-based RL~\cite{Ding2024DiffusionWM}. Different from these existing works, we propose ADEPT for closed-loop policy evaluation and world-model adaptation, and provide the theoretical analysis of the return gap between ADEPT and actual environment. 


\section{Preliminary}



\paragraph{Offline RL using World Models}
We consider an unknown Markov Decision Process (MDP), referred to as the environment.
Supposing the MDP is fully-observable with discrete time, it could be defined by the tuple $M = (S, A, P, R, \mu_0, \gamma)$. 
$S$ and $A$ are the state and action spaces, respectively.
$P(s_{t+1}|s_t,a)$ is the transition probability and $R: S \times A \to \mathbb{R}$ is the reward function.
$\mu_0$ is the initial state distribution and $\gamma$ is the discount factor.
We consider an agent acts within the environment based on a policy $\pi(a|s)$ repeatedly. In each time step, the agent receives a state $s_t$ and samples an action via its policy $a_t \sim \pi( \cdot | s_t)$.
The environment transits into a new state $s_{t + 1} \sim P( \cdot | s_t, a_t )$ and returns a reward $r_t = R(s_t, a_t)$.
After a whole episode of interactions, a trajectory $\tau = (s_0, a_0, r_0, s_1, \dots, s_T, a_T, r_T)$ will be generated, which contains states, actions and rewards of maximum length $T$.
Based on that, the goal of RL is to learn an optimal policy $\pi^*$ to maximize the expectation of cumulative rewards from this MDP: $\pi^* = \arg\max_{\pi} \mathbb{E}_{s, a \sim \pi}(\sum_{t=0}^{T} \gamma^t r_t)$.

Specifically in offline model-based RL, only a dataset of trajectories $\mathcal{D}$ is available. Therefore, a prediction model of the environment is introduced, denoted as world model $\hat{M}$, to improve sample efficiency for further learning and planning.
Commonly the world model learns a single-step transition approximating the real dynamics $M$ of the environment in a supervised method based on $\mathcal{D}$.
Hence, once a world model has been trained, it could replace the real environment to generate synthetic trajectories.
Similar to standard RL, an initial state $s_0$ is sampled first from datasets, and based on that the interactions start.
After certain length of steps $H$, referred as horizon, a synthetic trajectory $\hat{\tau} = (\hat{s}_0, \hat{a}_0, \hat{r}_0, ..., \hat{s}_H, \hat{a}_H, \hat{r}_H)$ is generated, in which $\hat{s}_{t + 1} \sim \hat{P}( \cdot | \hat{s}_t, \hat{a}_t )$ and $\hat{r}_t = \hat{R}(s_t, a_t)$.
These imaginary trajectories are added into the experience buffer for RL algorithms to optimize its policy.

\paragraph{Diffusion Model} \label{preliminarydiffusion}
Diffusion model is one of the SOTA deep generative models for images and videos.
The purpose of diffusion model is to learn an underlying data distribution $q(\bm{x_0})$ from a dataset $\mathcal{D} = \left\{ \bm{x_i} \right\}$.
In Denoising Diffusion Probabilistic Model (DDPM) \cite{nichol2021improved}, the synthetic data generation is conducted by denoising real data $x_0$ from noises $\mathcal{N}(\bm{0}, \bm{I})$ with $K$ steps. 
To train such a denoising step, we first formulate a predefined forward process $q\left(\bm{x_{k}} | \bm{x_k-1}\right) = \mathcal{N}\left(\sqrt{\alpha_{k-1}}\bm{x_{k-1}}, \sqrt{1-\alpha_{k-1}}\bm{I}\right)$ to add noise on real data in the dataset step by step.
Finally the distribution would be similar to Gaussian noises.
Here, $\alpha_k = 1 - \beta_k$, and $\beta_{1:K}$ is a certain various schedule.
The diffusion model define another parameterized reverse process $p_\theta\left(\bm{x_{k-1}} | \bm{x_k}\right) = \mathcal{N}\left(\mu_\theta(\bm{x_k}), \bm{\Sigma_k}\right)$ to denoise the real data from the Gaussian noise $\mathcal{N}\left(0, \bm{I}\right)$.
By defining $\Bar{\alpha}_k = \prod_{i=1}^k \alpha_i$, $\mu_\theta$ and $\bm{\Sigma_k}$ can be rewritten as follows:
\begin{align} \label{denoise}
    \mu_\theta(\bm{x_k}) = \frac{1}{\sqrt{a_k}}\left(\bm{x_k} - \frac{\beta_k}{\sqrt{1 - \Bar{\alpha}_k}} \bm{\epsilon}_\theta(\bm{x_k}, k)\right) \ \ {\rm and} \ \ 
    \bm{\Sigma_k} = \beta_k \frac{1 - \Bar{\alpha}_{k-1}}{1 - \Bar{\alpha}_k}\bm{I}.
\end{align}
Here, $\bm{\epsilon}_\theta$ is the parameterized noise prediction model to be trained.
Therefore, the loss function of the diffusion model is defined as: $L(\theta) = \mathbb{E}_{k \sim [1,K], x_0 \sim q, \bm{\epsilon} \sim N(\bm{0}, \bm{I})}(\lVert\bm{\epsilon} - \bm{\epsilon}_\theta(\bm{x}_k, k)\rVert^2)$,
where $\bm{\epsilon}$ is the real noise added in each step.
When generating the synthetic data $\hat{x_0}$, beginning with $\hat{x}_K$ sampled from $\mathcal{N}(\bm{0}, \bm{I})$,  predicts noise possibly added in each step until reaching $\hat{x}_0$.

In this work, the diffusion model is guided by current state and action to predict the next state.
Such a setting leads to modeling a conditional distribution $q(\bm{x}|\bm{y})$, in which $y$ is the attribute or label of the generated samples.
To handle this, two kinds of conditional diffusion models have been proposed: classifier-guided and classifier-free methods.
In classifier-guided methods, an extra parameterized classifier $p_\phi(\bm{y}|\bm{x_k})$ is trained on noisy sample of $\bm{x}$ and label $\bm{y}$.
On the contrary, classifier-free methods don't train a separate classifier, but regard the condition as an extra input of the noise prediction model.
In that case, the noise prediction equation is replaced by:
\begin{equation} \label{classifier_free_equation}
\bm{\epsilon}_\theta(\bm{x_k}, k | \bm{y}) = \bm{\epsilon}_\theta(\bm{x_k}, \varnothing, k) + \omega (\bm{\epsilon}_\theta(\bm{x_k}, \bm{y}, k) - \bm{\epsilon}_\theta(\bm{x_k}, \varnothing, k)),
\end{equation}
in which $\varnothing$ is the default value representing no conditions.
In ADEPT, we choose the classifier-free method to train the diffusion model.

\section{Methodology}

\begin{figure}
    \centering
    \includegraphics[width=0.9\columnwidth]{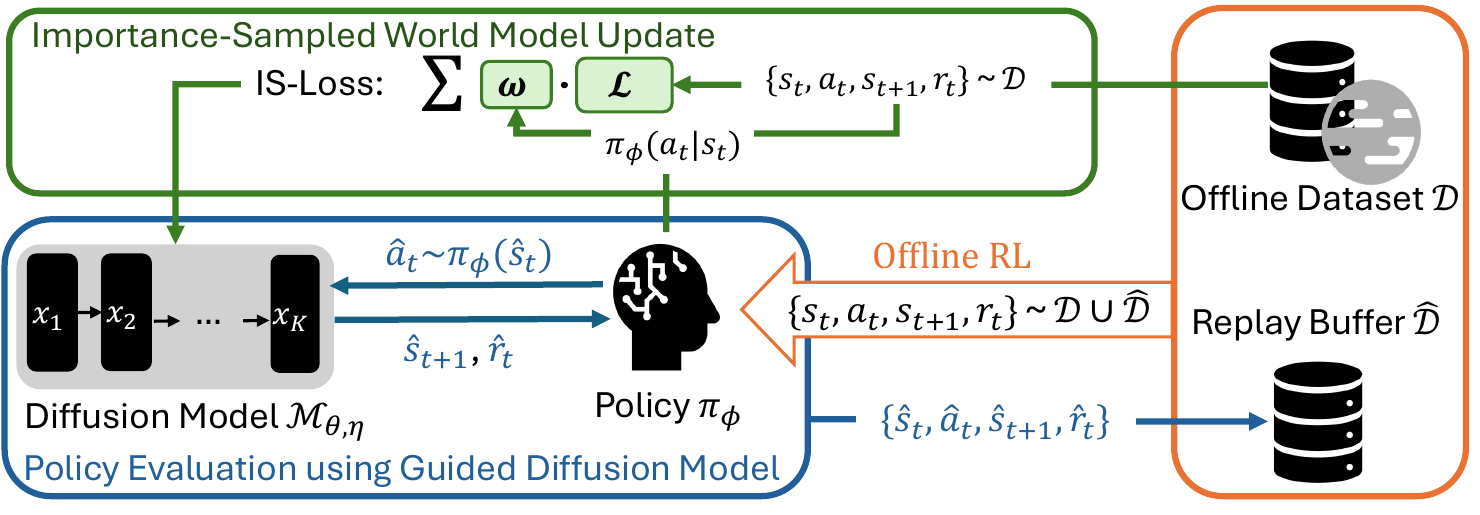}
    \caption{An overview of our ADEPT algorithm. It iteratively leverages a guided diffusion world model to directly evaluate the target policy with actions drawn from it, and then performs an importance-sampled world model update to adaptively align the world model with the updated policy. An upper bound on the return gap is analyzed in Section~4.3.}
    \label{fig:1}
\end{figure}


In this section, we illustrate details of ADEPT. As shown in Figure \ref{fig:1}, the two key components in ADEPT: policy evaluation on guided diffusion world model, and importance-sampled world model update, work in a closed-loop operation throughout the whole training process. The explanations of these two components are covered in \ref{method1} and \ref{method2}, respectively.
In \ref{proof}, we provide theoretical derivation to bound the return gap of the proposed method.

\subsection{Policy Evaluation using Guided Diffusion Model} \label{method1}

Before the policy iteration, we first utilize the offline dataset to initialize guided diffusion world model $\mathcal{M}_{\theta, \eta}$, consisting of noise prediction model $\bm{\epsilon}_\theta$ and a multi-layer perceptron $r_\eta$, to simulate the conditional distribution of the one-step transition $P(s_{t+1}|s_t, a_t)$ and reward function $R(s_t, a_t)$. 
The offline dataset is normalized by linearly mapping each dimension of the state space to have $0$ mean and $1$ standard deviation.
While training the diffusion model, we sample a minibatch of tuples $(s_t, a_t, r_t, s_{t+1})$ consisting of the state, action, reward and next state from the normalized offline datasets in each iteration.
As mentioned in \ref{preliminarydiffusion}, we follow DDPM and adopt classifier-free method to learn the conditional probabilities of the state transition based on current state $s_t$ and action $a_t$, \emph{i.e.}, we replace $x$ with $s_{t+1}$ and $y$ with $(s_t, a_t)$ in equation \eqref{classifier_free_equation}.
Since the reward function plays a significant role in RL training, the model should emphasize more on its accuracy and decouple reward function from state transition dynamics.
Therefore, we separately train $r_{\eta}(s_t, a_t, s_{t+1})$ to predict the reward function and the terminal signal. Introducing $s_{t+1}$ as an extra input significantly improves the accuracy of terminal signal prediction, and in evaluation the next state is sampled and denoised by $\bm{\epsilon}_\theta$.

Once $\mathcal{M}_{\theta, \eta}$ is initialized, we could use it to interact with the current policy $\pi_\phi$, generating data to evaluate the current policy. At the beginning of each policy evaluating iteration,  $\mathcal{M}_{\theta, \eta}$ randomly samples a state from $\mathcal{D}$ to be the start state $\hat{s}_0$, even though it may appear as a middle state in the real trajectory. Based on the current state $\hat{s}_t$, an action $\hat{a}_t \sim \pi_\phi(\hat{s}_t)$ is sampled given target policy. Conditioned by $\hat{s}_t$ and $\hat{a}_t$, the diffusion model generates the next state $\hat{s}_{t+1}$ after $K$ denoising steps via $\bm{\epsilon}_\theta$. Next, $r_\eta$ generates the reward $\hat{r}_t$ and terminal signal $d_t$, given $\hat{s_t}, a_t$, and $\hat{s}_{t+1}$, in which $\hat{s}_{t+1}$ and $\hat{r_t}$ are clipped into the range of the offline dataset. Such an iteration continues till when $d_t$ is true or $t$ reaches horizon $H$. The generated trajectory $\hat{\tau}$
is added into the replay buffer $\hat{\mathcal{D}}$, combined with $\mathcal{D}$ to improve the policy via offline RL algorithms.

\subsection{Importance-Sampled World Model Update} \label{method2}

Once the policy is updated, there is a policy shift between $\pi_\phi$ and the behavior policy $\pi_\mathcal{D}$ that collects $\mathcal{D}$. The distribution estimation from $\mathcal{M}_{\theta, \eta}$ could lose accuracy under the new policy. To handle this, we adopt the importance-sampling technique to update $\mathcal{M}_{\theta, \eta}$ with offline dataset, guiding $\hat{\mathcal{D}}$ towards the accurate distribution under the current policy. 
This is achieved by re-weighting the loss function of multiple samples to reduce the discrepancy between $\pi_\phi$ and $\pi_\mathcal{D}$. 
Even if $\pi_\mathcal{D}$ is not available, it's not hard to estimate the behavior policy from the offline dataset via behavior cloning (BC)~\cite{nair2018overcoming}.
For each transition $\{(s^i_{t_i}, a^i_{t_i}, r^i_{t_i}, s^i_{t_i+1})\}_{i=1}^N$ in the training batch, given their loss as $l_i(\theta, \eta)$, the importance weight as $\omega_i$, the total loss $\mathcal{L}(\theta, \eta)$ of the whole training batch is calculated as:
\begin{equation} \label{diffloss}
    \mathcal{L}(\theta, \eta) = \frac{1}{N} \sum_{i=1}^{N} \omega_i l_i(\theta, \eta) = \frac{1}{N}\sum_{i=1}^{N} \frac{\pi_\phi(a^i_{t_i} | s^i_{t_i})}{\pi_\mathcal{D}(a^i_{t_i} \vert s^i_{t_i})} l_i(\theta, \eta).
\end{equation}
Generally, the state-action pairs that have more probabilities under the current policy are associated with a larger weight in loss calculation.

\begin{algorithm} \label{algo1} \small
\caption{Our proposed ADEPT algorithm}

    \textbf{Hyperparameters:} denoising steps $K$, schedule $\beta_{1:K}$, evaluation steps per epoch $N_{e}$, batchsize for policy improvement $B_p$, Batchsize for world model update $B_m$, Horizon $H$

    \textbf{Input:} offline dataset $\mathcal{D}$, diffusion world model $\mathcal{M}_{\theta, \eta}$
    
    Initialize target policy $\pi_\phi$, replay buffer $\mathcal{\hat{D}} = \varnothing$;
    Normalized the dataset $\mathcal{D}$

    Initialize $\mathcal{M}_{\theta, \eta}$ with $\mathcal{D}$ and $\pi_\mathcal{D}$ till convergence: $IWU(\pi_\mathcal{D}, \mathcal{D}, \theta, \eta)$
    
    \While{not converged}{

    \For{$j= 0 \to N_{e}$}
    {
    
    $\mathcal{\hat{D}} \leftarrow \mathcal{\hat{D}} \cup PE(\mathcal{M}_{\theta, \eta}, \pi_\phi)$
    }
    Sample $\mathcal{B} = \{(s^i_{t_i}, a^i_{t_i}, r^i_{t_i}, s^i_{t_i+1})\}_{i=1}^{B_p}\sim\mathcal{D} \cup \mathcal{\hat{D}}$
    
    Update $\phi$ with $\mathcal{B}$ via offline RL methods

    $IWU(\pi_\phi, \mathcal{D}, \theta, \eta)$
    }

   \hrulefill \\
   \vspace{0.05in}
\underline{\bf Subroutine:} Importance-Sampled World-Model Update $(IWU)$:
    
    Sample batch $\{(s^i_{t_i}, a^i_{t_i}, r^i_{t_i}, s^i_{t_i+1})\}_{i=1}^{B_m}\sim\mathcal{D}$

        \For{$i= 0 \to B_m$}{
        
        $k \sim Uniform(\{1, 2, \dots, K\})$; $\bm{\epsilon} \sim \mathcal{N}(\bm{0}, \bm{I})$

        $s_{noise} = \sqrt{\Bar{\alpha}_k} s^i_{t_i+1} + \sqrt{1-\Bar{\alpha}_k}\bm{\epsilon}$

        Get Importance-sampling weight $\omega_i$ under $\pi$ via Equation \eqref{diffloss}
        
        $l_i = \lVert\bm{\epsilon} - \bm{\epsilon}_\theta(s_{noise}, (s^i_{t_i}, a^i_{t_i}), k) \rVert^2 + \lVert r_t - r_{\eta}(s^i_{t_i}, a^i_{t_i}, s^i_{t_i+1})\rVert^2$
        }
    Calculate $\mathcal{L}(\theta, \eta)$ via Equation \eqref{diffloss} and take gradient step on it.

    \hrulefill \\
   \vspace{0.05in}
    \underline{\bf Subroutine:} Policy Evaluation using Guided Diffusion Model ($PE$):
    
    Randomly select a state from $\mathcal{D}$ as $\hat{s}_0$.

    \For{$t=0 \to H$}
    {
    $x_K \sim \mathcal{N}(\bm{0}, \bm{I});\hat{a}_t \sim \pi(\hat{s}_t)$
    
    \For{$k=K-1 \to 0$}
    {
    
        $x_k \sim \mathcal{N}(\mu_\theta(x_{k+1} | \hat{s_t}, \hat{a}_t), \bm{\Sigma_k})$, in which $\mu_\theta$ and $\bm{\Sigma_k}$ are defined in Eq. \eqref{denoise}.
    }

    $\hat{s}_{t+1}$ = $x_0; \hat{r}_t, d_t = r_\eta(\hat{s}_t, \hat{a}_t, \hat{s}_{t+1})$

    \textbf{If} $d_t == True$ \textbf{then} break
    }
    Return the trajectory $\hat{\tau} = (\hat{s}_0, \hat{a}_0, \hat{r}_0, \hat{s}_1, \dots \hat{s}_t, \hat{a}_r, \hat{r}_t, \hat{s}_{t+1})$
\end{algorithm}
The complete training procedure of ADEPT is illustrated in Algorithm \ref{algo1}. Two subroutines: policy evaluation and world-model update alternate iteratively until convergence. In this work we select the state-of-the-art algorithm IQL \cite{kostrikov2021offline} and traditional method SAC \cite{haarnoja2018soft} as the offline RL methods to show the performance. The selection of hyperparameters is a tradeoff between performance and efficiency, which will be discussed and analyzed in the appendix.

\subsection{Return Gap Analysis} \label{proof}

To show that the expected return improvement could be guaranteed when adopting diffusion model as the world model to train the policy, we wish to provide a lower bound $C$ of the returns of the policy:
\begin{equation}    
J(\pi) = \mathbb{E}_{s_t,a_t \sim \pi} \sum_{t=0}^T \gamma^t r_t.
\end{equation}
By optimizing the lower bound, the performance could be guaranteed non-decreasing:
\begin{equation}
    J(\pi) \geq \hat{J}(\pi) - C.
\end{equation}

Here, $J(\pi)$ and $\hat{J}(\pi)$ are the returns of the policy under the real environment and the world model used for training and evaluating, which is the diffusion model in this work. With such a condition, we could guarantee improvement under the actual environment if the returns under the simulating model are promoted by at least $C$. Furthermore, $C$ is expected to be expressed in terms of the error quantities of the model. We denote the reward prediction error as $\hat{\varepsilon}_r$, the model transition error as $\hat{\varepsilon}_m$, and policy distribution shift error as $\hat{\varepsilon}_\pi$. Their detailed definitions are presented as follows:

\begin{definition} \label{definition_1}
We define $\hat{\varepsilon}_r$ to be the maximal expectation of total-variation distance (TV-distance) of the probabilities between predicted reward and true reward under the target policy $\pi_\phi$.
\begin{equation}
    \max_t \mathbb{E}_{s_t,a_t \sim \pi_{\phi}} D_{TV}(P(r_t\vert s_t,a_t) \Vert P_\eta(r_t \vert s_t,a_t)) \leq \hat{\varepsilon}_r.  \label{assump_0}
\end{equation}
\end{definition}
Although in practice the reward prediction model also takes $s_{t+1}$ as an input, it can still be considered only conditioned by $s_t$ and $a_t$, since $s_{t+1}$ is generated from denoising steps.
\begin{definition} \label{definition_2}
Similar to the Definition \ref{definition_1}, $\hat{\varepsilon}_m$ is defined as the maximal expected TV-distance of the probabilities between predicted next state and true value under $\pi_\mathcal{D}$.
\begin{equation}
    \max_t \mathbb{E}_{s_t,a_t \sim \pi_{\phi}} D_{TV}(P(s_{t+1}\vert s_t,a_t \Vert P_\theta(s_{t+1} \vert s_t,a_t)) \leq \hat{\varepsilon}_m,
\end{equation}
\end{definition}
In practice, $\hat{\varepsilon}_r$ and $\hat{\varepsilon}_m$ could be estimated by measuring the mean square error (MSE) between the predicted states or rewards and their values in the offline dataset in practice. To be noticed that $s_t$ and $a_t$ are sampled from the distribution under $\pi_\phi$ in both equations, but the diffusion model can only be trained from the offline dataset collected by $\pi_\mathcal{D}$. Therefore, it's necessary to take importance-sampling technique to keep the diffusion model fitting the new distribution under $\pi_\phi$.

\begin{definition} \label{definition_3}
$\hat{\varepsilon}_\pi$ is denoted by the maximal TV-distance of the policy between iterations.
\begin{equation}
\max_s D_{TV}(\pi(a \vert s)\Vert \pi_\phi(a\vert s)) \leq \hat{\varepsilon}_\pi.
\end{equation}
\end{definition}
This error measures how the evaluating policy $\pi$ has shifted from the target policy $\pi_\phi$ that is evaluated under diffusion world model and collects the dataset for policy improvement. 

\begin{theorem} Given $\hat{\varepsilon}_r$, $\hat{\varepsilon}_m$ and $\hat{\varepsilon}_\pi$, the bound $C$ between the true return and the IS-diffusion model-based return can be expressed as follows:
\begin{equation}
    J(\pi) \geq \hat{J}_{\theta, \eta}(\pi) - 2r_{max}(\frac{\hat{\varepsilon}_r + 2\hat{\varepsilon}_\pi}{1 - \gamma} + \frac{\gamma(2\hat{\varepsilon}_\pi + \hat{\varepsilon}_m)}{(1-\gamma)^2}).
\end{equation}
\end{theorem}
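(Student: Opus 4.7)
The plan is to bound the return gap by inserting the target policy $\pi_\phi$ (under whose distribution $\hat{\varepsilon}_r$ and $\hat{\varepsilon}_m$ are defined) as an intermediate, producing the three-term telescoping
\begin{equation*}
J(\pi) - \hat{J}_{\theta,\eta}(\pi) = \bigl[J(\pi) - J(\pi_\phi)\bigr] + \bigl[J(\pi_\phi) - \hat{J}_{\theta,\eta}(\pi_\phi)\bigr] + \bigl[\hat{J}_{\theta,\eta}(\pi_\phi) - \hat{J}_{\theta,\eta}(\pi)\bigr].
\end{equation*}
The outer two pieces hold the dynamics fixed and vary the policy, so they will produce the $\hat{\varepsilon}_\pi$ contributions; the middle piece holds the policy fixed at $\pi_\phi$ and varies the dynamics, producing the $\hat{\varepsilon}_r$ and $\hat{\varepsilon}_m$ contributions. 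Combining the three pieces and lining up the geometric factors then yields the stated bound.

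For the middle term I would apply a standard simulation-lemma coupling. The Bellman recurrence for the value gap $V^M_{\pi_\phi} - V^{\hat{M}}_{\pi_\phi}$ splits into a single-step reward difference, a single-step transition difference weighted by $V^M_{\pi_\phi}$, and a propagated value gap at the next state. Using the elementary inequality $|\mathbb{E}_p f - \mathbb{E}_q f| \leq 2\|f\|_\infty D_{TV}(p,q)$ together with $\|V^M_{\pi_\phi}\|_\infty \leq r_{max}/(1-\gamma)$, the reward and transition TV bounds from Definitions~\ref{definition_1}--\ref{definition_2} (legitimate precisely because their expectations are taken under $\pi_\phi$) contribute $2 r_{max}\hat{\varepsilon}_r$ and $2\gamma r_{max}\hat{\varepsilon}_m/(1-\gamma)$ per step respectively; iterating the recurrence and summing geometrically gives $\frac{2 r_{max}\hat{\varepsilon}_r}{1-\gamma} + \frac{2\gamma r_{max}\hat{\varepsilon}_m}{(1-\gamma)^2}$. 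For each outer term I would apply the same style of recurrence inside a single fixed MDP (the real environment for the first term, the model for the third), where only the policy TV gap $\hat{\varepsilon}_\pi$ enters, combined with the action-value boundedness $\|Q\|_\infty \leq r_{max}/(1-\gamma)$. This yields $\frac{2 r_{max}\hat{\varepsilon}_\pi}{(1-\gamma)^2}$ per outer term, which after splitting $1/(1-\gamma)^2 = [(1-\gamma)+\gamma]/(1-\gamma)^2 \cdot (1/(1-\gamma))$ and summing both outer contributions reproduces the $2\hat{\varepsilon}_\pi$ coefficients attached to both the $1/(1-\gamma)$ and $\gamma/(1-\gamma)^2$ factors in the theorem statement.

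The main obstacle will be the careful bookkeeping in the coupling for the middle term: the one-step transition TV error $\hat{\varepsilon}_m$ must be propagated correctly through the Bellman recursion into a divergence between the rollout state distributions under $M$ and $\hat{M}$, and this propagation is what upgrades one factor of $(1-\gamma)^{-1}$ (from the geometric discount sum) into the $(1-\gamma)^{-2}$ factor multiplying $\hat{\varepsilon}_m$. A secondary care point is that the boundedness constant $r_{max}/(1-\gamma)$ must be invoked in the correct MDP at each stage, and the conversion from a TV bound on policies or dynamics to a bound on expectation differences must use the tight factor of two in $|\mathbb{E}_p f - \mathbb{E}_q f| \leq 2\|f\|_\infty D_{TV}(p,q)$; once all constants are aligned, collecting the reward, transition, and two policy-shift contributions reproduces the stated bound exactly.
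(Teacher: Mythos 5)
Your proposal is correct and uses the same top-level decomposition as the paper: inserting the data-collecting/target policy as an intermediate to split the gap into two policy-shift terms and one model-error term, exactly as the paper's proof sketch ($\mathcal{L}_1,\mathcal{L}_2,\mathcal{L}_3$) does. Where you diverge is in how each piece is bounded. The paper works on the occupancy-measure side: it proves that the state--action joint distributions at time $t$ satisfy $D_{TV}(P_t\Vert\hat{P}_t)\leq t\delta + D_{TV}(P_0\Vert\hat{P}_0)$, splits the joint TV into a state-marginal part plus a $\max_s D_{TV}(\pi\Vert\hat{\pi})$ part, and then converts distributional error to return error via $r_{max}\sum_t\gamma^t(2\varepsilon_r + 2D_{TV}(P_t\Vert\hat{P}_t))$, summing the arithmetico-geometric series $\sum_t\gamma^t(t+1)=\tfrac{1}{1-\gamma}+\tfrac{\gamma}{(1-\gamma)^2}$. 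You instead run the dual, value-function-side argument: a Bellman recursion on $V^M_{\pi_\phi}-V^{\hat{M}}_{\pi_\phi}$ with $\|V\|_\infty\leq r_{max}/(1-\gamma)$ for the middle term, and the analogous recursion with $\|Q\|_\infty\leq r_{max}/(1-\gamma)$ for the two outer terms. Both routes are standard and produce identical constants --- your accounting of the outer terms as $2r_{max}\hat{\varepsilon}_\pi/(1-\gamma)^2$ each, re-split across the $1/(1-\gamma)$ and $\gamma/(1-\gamma)^2$ factors, matches the paper's $2(t+1)\hat{\varepsilon}_\pi$ summation exactly, and your middle-term constants match the stated theorem. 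The occupancy-measure route makes the "where does the distribution in Definitions 4.1--4.2 live" issue explicit (the expectation is taken under $P_{t-1}$ of the real rollout in the paper's Lemma A.2), whereas your Bellman route buries that same subtlety in the step where the per-state TV assumptions must be integrated against the visited state distribution; you correctly flag this as the main bookkeeping hazard, so the proposal stands as a valid alternative derivation.
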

\begin{proof}
We give an outline of our proof next and display the whole proof in the Appendix. To prove this, we first divide the discrepancy between $J(\pi)$ and $\hat{J}_{\theta, \eta}(\pi)$ into $\mathcal{L}_1 = J(\pi) - J(\pi_\phi)$, $\mathcal{L}_2 = J(\pi_\phi) - \hat{J}_{\theta,\eta}(\pi_\phi)$ and $\mathcal{L}_3 = \hat{J}_{\theta,\eta}(\pi_\phi) - \hat{J}_{\theta, \eta}(\pi)$. Then we will prove several lemmas to show that if the reward is finite, $\mathcal{L}_1$ and $\mathcal{L}_3$ could be bounded by $\hat{\varepsilon}_\pi$, while $\mathcal{L}_2$ could be bounded by $\hat{\varepsilon}_m$ and $\hat{\varepsilon}_r$. Finally we combined these three bounds together to complete the theorem.
\end{proof}

\begin{lemma}
    We define $r_{max} = \max_{s,a} r(s,a)$. For any $\pi, \hat{\pi}$ under any two environmental model $\mathcal{M}$ and $\hat{\mathcal{M}}$, satisfying
    \begin{equation}
        \max_t \mathbb{E}_{s_t,a_t \sim \pi} D_{TV}(P(r_t\vert s_t,a_t) \Vert \hat{P}(r_t \vert s_t,a_t)) \leq \varepsilon_r ,
    \end{equation}
    we have 
    \begin{equation}
        \left\vert J(\pi) - \hat{J}(\hat{\pi}) \right\vert
        \leq r_{max} \sum_t \gamma^t (2\varepsilon_r + 2D_{TV}(P_t(s,a)\Vert \hat{P}_t(s,a))).
    \end{equation}
\end{lemma}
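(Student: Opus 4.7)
The plan is to decompose the difference of returns into per-step expected-reward differences and bound each of them by inserting an intermediate expectation that isolates reward-model error from state-action occupancy drift. I would first write $J(\pi) = \sum_t \gamma^t \mathbb{E}_{(s,a)\sim P_t}[\bar r(s,a)]$ and $\hat J(\hat\pi) = \sum_t \gamma^t \mathbb{E}_{(s,a)\sim \hat P_t}[\hat{\bar r}(s,a)]$, where $\bar r(s,a) = \mathbb{E}_{r\sim P(\cdot|s,a)}[r]$ and $\hat{\bar r}(s,a) = \mathbb{E}_{r\sim \hat P(\cdot|s,a)}[r]$ are the mean rewards under the two reward models, and $P_t$, $\hat P_t$ denote the joint state-action laws at step $t$ induced by $(\pi,\mathcal{M})$ and $(\hat\pi,\hat{\mathcal{M}})$ respectively.

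For a fixed $t$, I would insert the cross term $\mathbb{E}_{(s,a)\sim P_t}[\hat{\bar r}(s,a)]$ and apply the triangle inequality, producing a \emph{reward error under $P_t$} piece and a \emph{state-action drift} piece. The first is handled by the pointwise bound $|\bar r(s,a) - \hat{\bar r}(s,a)| \leq 2 r_{max}\, D_{TV}(P(\cdot|s,a), \hat P(\cdot|s,a))$, a direct consequence of the standard inequality $|\mathbb{E}_P X - \mathbb{E}_Q X| \leq 2\|X\|_\infty D_{TV}(P,Q)$ applied with $X = r$ and $\|r\|_\infty \leq r_{max}$; taking expectation under $P_t$ and invoking the lemma's hypothesis converts this into $2 r_{max}\, \varepsilon_r$. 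The second piece is handled by the same TV inequality applied at the level of state-action laws, with the bounded function $\hat{\bar r}$ playing the role of $X$, producing $|\mathbb{E}_{P_t}\hat{\bar r} - \mathbb{E}_{\hat P_t}\hat{\bar r}| \leq 2 r_{max}\, D_{TV}(P_t, \hat P_t)$.

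Finally I would pull the absolute value inside the $\gamma^t$-weighted sum via the triangle inequality and collect the two per-step bounds to obtain $r_{max}\sum_t \gamma^t(2\varepsilon_r + 2 D_{TV}(P_t, \hat P_t))$, matching the claim. The proof is mainly careful bookkeeping, so the chief subtlety is the choice of intermediate: because the hypothesis supplies the reward TV-error under $\pi$ (equivalently under $P_t$) rather than under $\hat\pi$, one must insert $\hat{\bar r}$ evaluated against $P_t$; the symmetric insertion would require a reward-error assumption under $\hat\pi$ that is not provided. A secondary point worth confirming is the constant in the TV inequality — a tighter $r_{max} D_{TV}$ bound is available by centering the reward — but the looser factor of $2$ suffices for the stated inequality.
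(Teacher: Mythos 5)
Your proposal is correct and follows essentially the same route as the paper's proof: both expand the returns as $\sum_t \gamma^t$ times a per-step expected reward, insert the cross term $\mathbb{E}_{(s,a)\sim P_t}[\hat{\bar r}(s,a)]$, and bound the two resulting pieces by $2r_{max}\varepsilon_r$ (via the reward-distribution TV hypothesis under $\pi$) and $2r_{max}D_{TV}(P_t\Vert\hat P_t)$ respectively. Your remark about which side the intermediate term must be inserted on (so that the reward error is measured under $P_t$, matching the hypothesis) is exactly the choice the paper makes.
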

Here, $P_t(s,a)$ denotes the joint distribution of state and action at time step $t$ under $\mathcal{M}$ and $\pi$, while $\hat{P}_t(s,a)$ is under $\hat{\mathcal{M}}$ and $\hat{\pi}$. This lemma decomposes the discrepancy into the form of reward error and the state-action joint distribution shift. Next, we further analyze the bound of the latter.

\begin{lemma} \label{lemma2}
    For $P_t$ under $\pi$ and $\mathcal{M}$, and $\hat{P}_t$ under $\hat{\pi}$ and $\hat{\mathcal{M}}$, there existing $\delta$ \emph{s.t.}
    \begin{equation}
        \max_t \mathbb{E}_{s,a \sim \pi}D_{TV}(P_{t+1}(s',a' \vert s_t = s , a_t = a) \Vert \hat{P}_{t+1}(s',a'\vert s_t = s, a_t = a)) \leq \delta.
    \end{equation}
    Then we have
    \begin{equation}
        D_{TV}(P_t(s, a) \Vert \hat{P}_t(s, a)) \leq t\delta + D_{TV}(P_0(s,a)\Vert \hat{P}_0(s,a)).
    \end{equation}
\end{lemma}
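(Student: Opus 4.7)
The plan is to prove Lemma~\ref{lemma2} by induction on $t$, using the standard fact that total-variation distance cannot increase when two distributions are pushed through a common Markov kernel (the data-processing inequality for TV). The base case $t=0$ is immediate, since the inequality reduces to $D_{TV}(P_0,\hat{P}_0)\leq D_{TV}(P_0,\hat{P}_0)$, so the entire argument lives in the inductive step.

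For the inductive step, I would assume $D_{TV}(P_t,\hat{P}_t)\leq t\delta + D_{TV}(P_0,\hat{P}_0)$ and view $P_{t+1}$ and $\hat{P}_{t+1}$ as the pushforwards of $P_t$ and $\hat{P}_t$ under the one-step joint kernels $T(s',a'|s,a) = P_{t+1}(s',a'|s_t=s,a_t=a)$ and $\hat{T}(s',a'|s,a) = \hat{P}_{t+1}(s',a'|s_t=s,a_t=a)$. Inserting the telescoping distribution $\hat{T}P_t$ and applying the triangle inequality gives
\begin{equation}
D_{TV}(P_{t+1},\hat{P}_{t+1}) \leq D_{TV}(TP_t,\hat{T}P_t) + D_{TV}(\hat{T}P_t,\hat{T}\hat{P}_t).
\end{equation}
The second term is bounded by $D_{TV}(P_t,\hat{P}_t)$ by data-processing, so the inductive hypothesis handles it. For the first term I would expand the TV definition as a sum over $(s',a')$ of an absolute value, pull the inner sum over $(s,a)$ outside the absolute value via the triangle inequality, and recognize the result as $\mathbb{E}_{(s,a)\sim P_t} D_{TV}(T(\cdot|s,a),\hat{T}(\cdot|s,a))$, which is bounded by $\delta$ by the hypothesis of the lemma. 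Adding the two contributions yields $(t+1)\delta + D_{TV}(P_0,\hat{P}_0)$, completing the induction.

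I do not expect a serious technical obstacle: this is essentially the two-line simulation lemma from the model-based RL literature. The one point that needs care is matching the expectation measure in the hypothesis, written as $\mathbb{E}_{s,a\sim\pi}$, with the marginal $P_t$ that my decomposition naturally produces; these agree once $\pi$ is read as shorthand for the on-policy state--action distribution $P_t$ induced by $\pi$ at step $t$, which is the usual convention in this line of work. The mirror decomposition (telescoping through $T\hat{P}_t$ instead) would produce an expectation under $\hat{P}_t$ and fail to line up with the hypothesis, so choosing the correct telescope is the only place where one must be careful.
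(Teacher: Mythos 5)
Your proof is correct and is essentially the paper's own argument: the paper performs the same telescoping (adding and subtracting $P_{t-1}(s,a)\hat{P}(s',a'\vert s,a)$, i.e.\ your $\hat{T}P_t$ pivot), bounds one term by $\delta$ via the expectation over $P_{t-1}$ and the other by $D_{TV}(P_{t-1}\Vert\hat{P}_{t-1})$ by summing out the kernel (your data-processing step), and then unrolls the recursion rather than phrasing it as induction. Your remark about reading $\mathbb{E}_{s,a\sim\pi}$ as the on-policy marginal $P_t$, and about which telescope lines up with that hypothesis, matches exactly what the paper's derivation implicitly does.
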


This lemma provides the upper bound of the state-action joint distribution shift at time step $t$, when the initial state distribution is shared and the state transition distribution under different policies and models is bounded. Next step is to decompose the joint distribution into the form of $\varepsilon_m$ and $\varepsilon_\pi$. 

\begin{lemma} \label{lemma3}
For any 2 different joint distributions of state and action pairs $P$ and $\hat{P}$, we have
\begin{equation}
    D_{TV}(P(s, a)\Vert \hat{P}(s,a)) \leq D_{TV} (P(s)\Vert \hat{P}(s)) + \max_s D_{TV}(\pi(a \vert s)\Vert \hat{\pi}(a\vert s)).
\end{equation}
\end{lemma}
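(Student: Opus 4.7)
The plan is to reduce the joint TV-distance to its marginal and conditional constituents by inserting a carefully chosen intermediate distribution and applying the triangle inequality. Since both joints factorize as $P(s,a) = P(s)\pi(a\mid s)$ and $\hat P(s,a) = \hat P(s)\hat\pi(a\mid s)$, I would start from the definition
\begin{equation}
D_{TV}(P(s,a)\Vert \hat P(s,a)) = \tfrac{1}{2}\sum_{s,a}\bigl|P(s)\pi(a\mid s) - \hat P(s)\hat\pi(a\mid s)\bigr|,
\end{equation}
and introduce the hybrid $\hat P(s)\pi(a\mid s)$ inside the absolute value. Adding and subtracting this term, then applying $|x+y|\le |x|+|y|$, splits the sum into two manageable pieces.

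Next I would bound each piece separately. The first piece, $\tfrac{1}{2}\sum_{s,a}\pi(a\mid s)\,|P(s)-\hat P(s)|$, collapses after summing $a$ out (using $\sum_a \pi(a\mid s) = 1$) and yields exactly $D_{TV}(P(s)\Vert \hat P(s))$. The second piece, $\tfrac{1}{2}\sum_{s,a}\hat P(s)\,|\pi(a\mid s)-\hat\pi(a\mid s)|$, factors into $\sum_s \hat P(s)\,D_{TV}(\pi(\cdot\mid s)\Vert \hat\pi(\cdot\mid s))$, which I then upper-bound by $\max_s D_{TV}(\pi(\cdot\mid s)\Vert \hat\pi(\cdot\mid s))$ since $\hat P(s)$ is a probability distribution summing to one. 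Combining the two bounds gives the claim.

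There is no real obstacle here; the only subtlety is choosing which hybrid to insert. Using $\hat P(s)\pi(a\mid s)$ (rather than $P(s)\hat\pi(a\mid s)$) is what makes the conditional-TV term weighted by $\hat P(s)$, so that bounding by the max can be done cleanly without appealing to either $P$ or $\hat P$ being the ``true'' marginal. The symmetric choice would give an analogous inequality with $P(s)$ weighting the conditional term and $\max_s D_{TV}(\pi\Vert\hat\pi)$ as the same upper bound, confirming the result is insensitive to this arbitrary choice. The argument extends verbatim to the continuous case by replacing sums with integrals, so the same proof serves both discrete and continuous state-action spaces as used elsewhere in the paper.
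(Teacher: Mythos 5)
Your proof is correct and follows essentially the same route as the paper's: both insert a hybrid distribution, apply the triangle inequality, sum out the conditional to recover the marginal TV term, and bound the remaining weighted conditional TV by its maximum over states. The only difference is the choice of hybrid ($\hat P(s)\pi(a\mid s)$ versus the paper's $P(s)\hat\pi(a\mid s)$), which, as you note, is immaterial by symmetry.
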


To be noticed that this lemma is also correct for conditional distribution of $s$ and $a$. By combining Lemma \ref{lemma2} and Lemma \ref{lemma3}, the joint distribution shift can be expressed with $\varepsilon_r$ and $\varepsilon_m$. Therefore, when applied with the error defined in diffusion model, the final bound $C$ could be concluded.

With this theorem, the monotonic improvement of the true return $J(\pi)$ is guaranteed when the returns under IS-diffusion model $\hat{J}_{\theta, \eta}(\pi)$ is improved by more than $C$.
Besides, by lowering $\hat{\varepsilon}_r$, $\hat{\varepsilon}_m$, and $\hat{\varepsilon}_\pi$, this bound could be further narrowed. These errors are mainly caused by the following reasons:

\begin{itemize}
 \vspace{-0.05in}
    \item \textbf{The compounding error}: Though the diffusion model has lower prediction error compared with conventional world models, as a one-step transition model, the variation in each step accumulates as the compounding error, which significantly increases $\hat{\varepsilon}_m$.
    \item \textbf{Out of distribution}: While using diffusion world model for policy evaluation, the action derived from the policy can drive the state out of distribution. In that case, the generated state and reward become unstable or even illegal, causing a high value of $\hat{\varepsilon}_m$ and $\hat{\varepsilon}_r$.
    \item \textbf{Policy shifting}: While the target policy keeps getting improved with both the offline dataset and generated replay buffer, it's drifting away from the original behavior policy that collects the data, leading to a high value of $\hat{\varepsilon}_\pi$.
    \vspace{-0.05in}
\end{itemize}

To handle these, in our method $H$ is set to be small enough to limit the accumulating variations. Since the initial state is randomly chosen, the whole dataset can be covered even with small horizons. The generated states and rewards are clipped within the range of the dataset to avoid out-of-distribution. Finally to solve the distribution shift problem, inspired by the model adaption to evolving policy method in online RL \cite{wang2023live}, we adopt importance-sampling in offline RL to continuously align the diffusion model $\mathcal{M}_{\theta, \eta}$ with the new distribution under current policy $\pi_\phi$ to lower $\hat{\varepsilon}_\pi$.

\section{Experiment}

Our experiments are designed to evaluate:
1. The performance of ADEPT with adaptive diffusion world model and offline RL updates, compared with other SOTA algorithms including diffusion-based methods.
2. The effectiveness of the proposed importance sampling and policy improvement in ADEPT.
We train and test our method on multiple environments and datasets in D4RL \cite{fu2020d4rl} to show the quantitative results, and further analyze our method with ablation study.




\subsection{Numerical Evaluation}

In this section, we evaluate our proposed ADEPT with 2 widely-used offline RL methods: IQL\cite{kostrikov2021offline} and SAC\cite{haarnoja2018soft} over the MuJoCo locomotion environments (e.g., halfcheetah, walker2d, and hopper) and on 3 different datasets (random, medium and medium-replay).
We select a number of SOTA algorithms as baselines, including model-free methods TD3+BC \cite{fujimoto2021minimalist}, CQL \cite{kumar2020conservative}, model-based methods such as MOPO \cite{yu2020mopo}, COMBO \cite{yu2021combo}, and diffusion-based methods as SyntheER \cite{lu2023synthetic} and Diffuser \cite{janner2022planning}, as well as behavior cloning. All experiments are conducted with the same training hyperparameters. 
The comparison is summarized in Table \ref{tab:table_1}.

\begin{table*}[h]\small
\centering
\setlength{\tabcolsep}{0.5mm}{
\scalebox{0.9}{
\begin{tabular}{@{}cc|cccccccc|cc@{}}

\toprule\toprule
\multirow{2}*{Environment} & \multirow{2}*{Dataset} & \multirow{2}*{SAC} & \multirow{2}*{TD3+BC} & \multirow{2}*{CQL} & \multirow{2}*{IQL} & \multirow{2}*{MOPO} & \multirow{2}*{COMBO} & \multirow{2}*{SynthER} & \multirow{2}*{Diffuser} & \multicolumn{2}{c}{ADEPT+ (Ours)} \\
~ & ~ & ~& ~& ~& ~& ~& ~& ~&~ & IQL &SAC\\
\hline
halfcheetah & \multirow{3}*{rnd} & 30.5& 11.3& 35.4&12.5 & 35.4 & 38.8 &17.2& 3.6 &$34.5\pm1.1$& $\bm{39.5 \pm 0.8}$ \\ 
walker2d & ~    & 4.1 & 0.6 & 7.0 & 5.4 & 13.6 & 7.0 & 4.2 & 3.5& $10.3\pm 2.2$ & $\bm{23.6\pm0.9}$ \\ 
hopper & ~      & 11.3& 8.6 & 10.8& 7.5 & 11.7 & 17.9 & 7.7 & 6.3& $34.4\pm0.9$& $\bm{37.5 \pm 5.7}$ \\ \hline
halfcheetah & \multirow{3}*{med} & -4.3& 48.1& 44.4&47.4 & 42.3 & 54.2 &49.6& 42.8& $55.2\pm0.5$& $\bm{56.6 \pm 2.3}$\\ 
walker2d & ~   & 0.9 & 82.7& 79.2 &78.3& 17.8 & 81.9 & 84.7 & 79.6&$\bm{97.2\pm6.5}$&$70.9\pm4.7$\\ 
hopper & ~  & 0.8 & 60.4& 58.0 &66.3& 28.0 & \textbf{97.2} & 72.0& 74.3 & $69.6\pm3.5$ &$32.8\pm3.9$\\ \hline
halfcheetah & \multirow{3}*{mix} & 29.9 & 44.8 & 46.2 &44.2 & 53.1 & 55.1 & 46.6 & 37.7 &$51.2\pm1.2$&$\bm{59.3\pm4.1}$\\ 
walker2d & ~ & 21.0 & 85.6 & 26.7 &94.7 & 39.0 & 56.0 & 83.3 & 70.6 &$\bm{105.3\pm1.4}$& $85.0\pm2.2$ \\ 
hopper & ~ & 70.4 & 64.4 & 48.6 &73.9 & 67.5& 89.5&103.2 & 93.6 &$97.6\pm3.7$& $\bm{104.0\pm2.7}$ \\ \hline
\multicolumn{2}{c|}{Average} & 18.3 & 45.2 & 39.6 & 47.8 & 55.3 & 34.3 & 52.1 & 45.8 & $\bm{61.7\pm2.3}$ & $56.6\pm3.0$ \\
\bottomrule\bottomrule
\end{tabular}}}
\caption{The evaluation of ADEPT compared with other SOTA algorithms including offline RL and diffusion-based algorithms, on D4RL MuJoCo environments with random(rnd), medium(med) and medium-replay(mix) datasets. We show the mean of standard deviation of the performance over 5 different seeds. ADEPT significantly outperforms the baselines and achieves very stable results, especially on random datasets due to its closed-loop operation supporting iterative diffusion world model adaptation and offline policy improvement.}
\label{tab:table_1}
\end{table*}

From our experimental result, the proposed ADEPT algorithms outperform the existing SOTA offline RL algorithms in most of the environments, especially in random dataset. This is consistent with our hypothesis that adapting the diffusion world model in offline RL is more critical, when there is a lack of expert demonstrations and the distribution shift becomes more severe as target policy moves toward optimum. Compared to the original SAC method which performs poorly due to lack of policy regularization, the diffusion model generated data has significantly improved its performance, with an average of 119.2\% and 104.7\% gain on random and medium-replay datasets, respectively.

This is because adaptive diffusion model guided policy improvement could be viewed as a conservative regularization method. For out-of-distribution situations, the diffusion model generates more relevant state transition results and corresponding reward function, which mitigates overestimation. For IQL that has already regularized its value function, the adaptive diffusion world model also provides synthetic data to complement the offline dataset, leading to an average of 211.8\% and 19.4\% performance gain over IQL on random and medium-replay datasets, respectively.

Compared with other diffusion-based offline RL algorithms, our method still gets significant advances in random and medium-replay datasets. Such a result shows that with a lack of expert demonstration and plenty of random data, a closed-loop iterative algorithm for diffusion world model adaptation and offline policy improvement could become more advantageous than using diffusion models as a policy representation like Diffuser~\cite{janner2022planning}, or generating synthetic dataset one time before training as SynthER~\cite{lu2023synthetic}. We also note that such a method may not be necessary when the dataset is collected by an expert policy, as less adaptation to the world model is needed.

\subsection{Ablation Study}

In the ablation study, we intend to validate the necessity of importance sampling and the effectiveness of each part in the close-loop ADEPT algorithm.
To accomplish this, we compare our methods under different settings: (1) offline RL with no generated data used, \emph{i.e.}, standard IQL and SAC; (2) offline RL with diffusion model trained one-time before training, so no importance sampling technique, \emph{i.e.}, Diff+IQL and Diff+SAC; (3) replacing policy improvement with behavior cloning to demonstrate the impact of offline RL. These three settings show the importance of adaptive diffusion world model with importance sampling, and offline reinforcement learning, respectively. We also perform additional experiments to choose the hyperparameters and validate the effectiveness of subordinate techniques like data normalization and clipping. The results are shown in the Appendix.

\begin{figure}[thp] \label{fig:ablation}
    \centering
    \includegraphics[width=0.9\columnwidth]{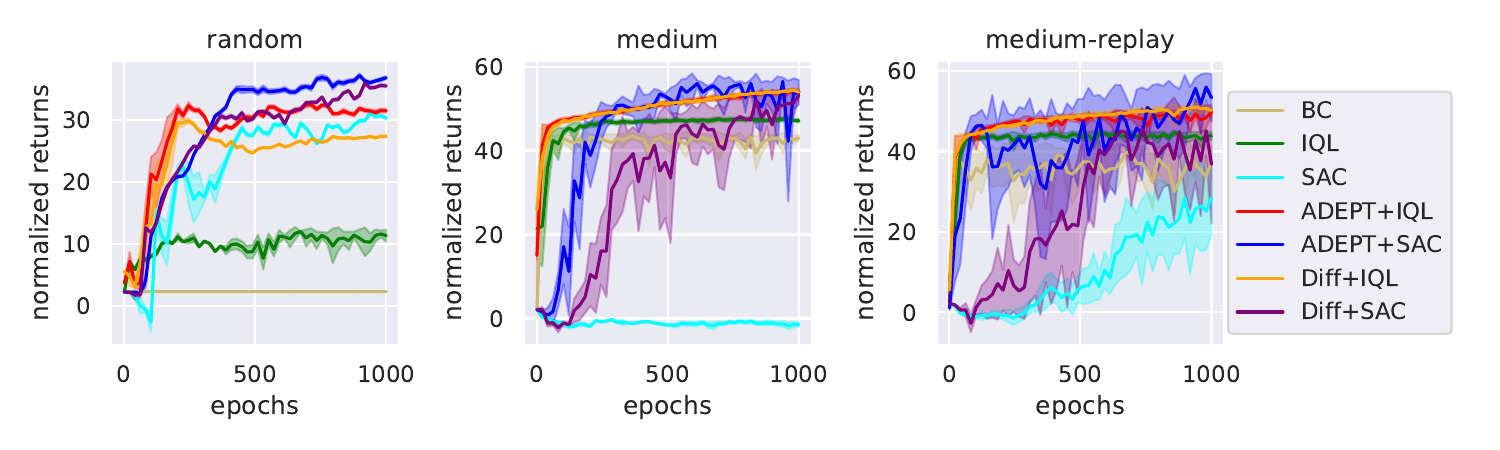}
    \caption{The training curves of different methods on halfcheetah environment for our ablation study. In particular, Diff+IQL and Diff+SAC generate synthetic data while removing the use of importance sampling, using a diffusion model trained one-time before training. IQL and SAC further remove the use of any synthetic data. All other settings remain the same with ADEPT methods. The results demonstrate the critical role of diffusion world model adaptation in offline RL. }
    \label{fig:enter-label}
\end{figure}

Figure~\ref{fig:ablation} shows the training curves of different algorithms in the halfcheetah environment in our ablation study. With no synthetic data, the original SAC algorithm fails in medium and medium-replay datasets (even lower than behavior cloning), while IQL fails in the random datasets. The diffusion world model generated samples help improve the performance of these two methods in all kinds of datasets. Besides, adopting importance sampling for diffusion model adaptation improves the performance of SAC substantially in all three datasets while only improving IQL significantly in random dataset. That is consistent with the hypothesis that samples in the random dataset are more scattered, so the importance sampling method for diffusion world model adaptation could have a higher impact on the generated distribution, aligning the world model with the target policy. The improvement on SAC shows that importance sampling reduces the distribution shift and helps SAC to reach comparable performance as IQL, even with no extra regularization methods used.

\section{Conclusion}
This paper proposes ADEPT to enable (i) a guided diffusion world
model to directly evaluate the target policy in offline reinforcement learning and (ii) an importance-sampled world model update to adaptively align the world model with the updated policy, in a closed-loop operation. Our analysis of the algorithm provides an upper bound on the return gap and illuminates key factors affecting the learning performance. Evaluations on D4RL show significantly improved results over state-of-the-art baselines. On random and medium-replay datasets, we obtain 119.2\% and 104.7\% performance improvements for SAC, and 211.8\% and 19.4\% for IQL on average. Our work provides important insights into the use of world model with offline reinforcement learning. However, the theoretical analysis and experiments are based on simple environments such as MuJoCo tasks. In practice, the three error sources could be hard to reduce, especially in complicated partial-observable environments, which is the limitation of this paper and remains to be further researched.


\bibliography{neurips_2023}

\begin{thebibliography}{54}
\providecommand{\natexlab}[1]{#1}
\providecommand{\url}[1]{\texttt{#1}}
\expandafter\ifx\csname urlstyle\endcsname\relax
  \providecommand{\doi}[1]{doi: #1}\else
  \providecommand{\doi}{doi: \begingroup \urlstyle{rm}\Url}\fi

\bibitem[Levine et~al.(2020)Levine, Kumar, Tucker, and Fu]{levine2020offline}
Sergey Levine, Aviral Kumar, George Tucker, and Justin Fu.
\newblock Offline reinforcement learning: Tutorial, review, and perspectives on open problems.
\newblock \emph{arXiv preprint arXiv:2005.01643}, 2020.

\bibitem[Prudencio et~al.(2023)Prudencio, Maximo, and Colombini]{prudencio2023survey}
Rafael~Figueiredo Prudencio, Marcos~ROA Maximo, and Esther~Luna Colombini.
\newblock A survey on offline reinforcement learning: Taxonomy, review, and open problems.
\newblock \emph{IEEE Transactions on Neural Networks and Learning Systems}, 2023.

\bibitem[Kiran et~al.(2021)Kiran, Sobh, Talpaert, Mannion, Al~Sallab, Yogamani, and P{\'e}rez]{kiran2021deep}
B~Ravi Kiran, Ibrahim Sobh, Victor Talpaert, Patrick Mannion, Ahmad~A Al~Sallab, Senthil Yogamani, and Patrick P{\'e}rez.
\newblock Deep reinforcement learning for autonomous driving: A survey.
\newblock \emph{IEEE Transactions on Intelligent Transportation Systems}, 23\penalty0 (6):\penalty0 4909--4926, 2021.

\bibitem[Kumar et~al.(2019)Kumar, Fu, Soh, Tucker, and Levine]{kumar2019stabilizing}
Aviral Kumar, Justin Fu, Matthew Soh, George Tucker, and Sergey Levine.
\newblock Stabilizing off-policy q-learning via bootstrapping error reduction.
\newblock \emph{Advances in neural information processing systems}, 32, 2019.

\bibitem[Yu et~al.(2021)Yu, Kumar, Rafailov, Rajeswaran, Levine, and Finn]{yu2021combo}
Tianhe Yu, Aviral Kumar, Rafael Rafailov, Aravind Rajeswaran, Sergey Levine, and Chelsea Finn.
\newblock Combo: Conservative offline model-based policy optimization.
\newblock \emph{Advances in neural information processing systems}, 34:\penalty0 28954--28967, 2021.

\bibitem[Rigter et~al.(2022)Rigter, Lacerda, and Hawes]{rigter2022rambo}
Marc Rigter, Bruno Lacerda, and Nick Hawes.
\newblock Rambo-rl: Robust adversarial model-based offline reinforcement learning.
\newblock \emph{Advances in neural information processing systems}, 35:\penalty0 16082--16097, 2022.

\bibitem[Matsushima et~al.(2020)Matsushima, Furuta, Matsuo, Nachum, and Gu]{matsushima2020deployment}
Tatsuya Matsushima, Hiroki Furuta, Yutaka Matsuo, Ofir Nachum, and Shixiang Gu.
\newblock Deployment-efficient reinforcement learning via model-based offline optimization.
\newblock \emph{arXiv preprint arXiv:2006.03647}, 2020.

\bibitem[Rashidinejad et~al.(2021)Rashidinejad, Zhu, Ma, Jiao, and Russell]{rashidinejad2021bridging}
Paria Rashidinejad, Banghua Zhu, Cong Ma, Jiantao Jiao, and Stuart Russell.
\newblock Bridging offline reinforcement learning and imitation learning: A tale of pessimism.
\newblock \emph{Advances in Neural Information Processing Systems}, 34:\penalty0 11702--11716, 2021.

\bibitem[Ha and Schmidhuber(2018)]{ha2018world}
David Ha and J{\"u}rgen Schmidhuber.
\newblock World models.
\newblock \emph{arXiv preprint arXiv:1803.10122}, 2018.

\bibitem[Hafner et~al.(2023)Hafner, Pasukonis, Ba, and Lillicrap]{hafner2023mastering}
Danijar Hafner, Jurgis Pasukonis, Jimmy Ba, and Timothy Lillicrap.
\newblock Mastering diverse domains through world models.
\newblock \emph{arXiv preprint arXiv:2301.04104}, 2023.

\bibitem[Ozair et~al.(2021)Ozair, Li, Razavi, Antonoglou, Van Den~Oord, and Vinyals]{ozair2021vector}
Sherjil Ozair, Yazhe Li, Ali Razavi, Ioannis Antonoglou, Aaron Van Den~Oord, and Oriol Vinyals.
\newblock Vector quantized models for planning.
\newblock In \emph{international conference on machine learning}, pages 8302--8313. PMLR, 2021.

\bibitem[Eysenbach et~al.(2022)Eysenbach, Khazatsky, Levine, and Salakhutdinov]{eysenbach2022mismatched}
Benjamin Eysenbach, Alexander Khazatsky, Sergey Levine, and Russ~R Salakhutdinov.
\newblock Mismatched no more: Joint model-policy optimization for model-based rl.
\newblock \emph{Advances in Neural Information Processing Systems}, 35:\penalty0 23230--23243, 2022.

\bibitem[Janner et~al.(2021)Janner, Li, and Levine]{janner2021offline}
Michael Janner, Qiyang Li, and Sergey Levine.
\newblock Offline reinforcement learning as one big sequence modeling problem.
\newblock \emph{Advances in neural information processing systems}, 34:\penalty0 1273--1286, 2021.

\bibitem[Ding et~al.(2024)Ding, Zhang, Tian, and Zheng]{Ding2024DiffusionWM}
Zihan Ding, Amy Zhang, Yuandong Tian, and Qinqing Zheng.
\newblock Diffusion world model.
\newblock \emph{arXiv preprint arXiv:2402.03570}, 2024.

\bibitem[Lu et~al.(2023)Lu, Ball, and Parker-Holder]{lu2023synthetic}
Cong Lu, Philip~J Ball, and Jack Parker-Holder.
\newblock Synthetic experience replay.
\newblock \emph{arXiv preprint arXiv:2303.06614}, 2023.

\bibitem[Yu et~al.(2020)Yu, Thomas, Yu, Ermon, Zou, Levine, Finn, and Ma]{yu2020mopo}
Tianhe Yu, Garrett Thomas, Lantao Yu, Stefano Ermon, James~Y Zou, Sergey Levine, Chelsea Finn, and Tengyu Ma.
\newblock Mopo: Model-based offline policy optimization.
\newblock \emph{Advances in Neural Information Processing Systems}, 33:\penalty0 14129--14142, 2020.

\bibitem[Kidambi et~al.(2020)Kidambi, Rajeswaran, Netrapalli, and Joachims]{kidambi2020morel}
Rahul Kidambi, Aravind Rajeswaran, Praneeth Netrapalli, and Thorsten Joachims.
\newblock Morel: Model-based offline reinforcement learning.
\newblock \emph{Advances in neural information processing systems}, 33:\penalty0 21810--21823, 2020.

\bibitem[Janner et~al.(2019)Janner, Fu, Zhang, and Levine]{janner2019trust}
Michael Janner, Justin Fu, Marvin Zhang, and Sergey Levine.
\newblock When to trust your model: Model-based policy optimization.
\newblock \emph{Advances in neural information processing systems}, 32, 2019.

\bibitem[Kaiser et~al.(2019)Kaiser, Babaeizadeh, Milos, Osinski, Campbell, Czechowski, Erhan, Finn, Kozakowski, Levine, et~al.]{kaiser2019model}
Lukasz Kaiser, Mohammad Babaeizadeh, Piotr Milos, Blazej Osinski, Roy~H Campbell, Konrad Czechowski, Dumitru Erhan, Chelsea Finn, Piotr Kozakowski, Sergey Levine, et~al.
\newblock Model-based reinforcement learning for atari.
\newblock \emph{arXiv preprint arXiv:1903.00374}, 2019.

\bibitem[Hafner et~al.(2019)Hafner, Lillicrap, Ba, and Norouzi]{hafner2019dream}
Danijar Hafner, Timothy Lillicrap, Jimmy Ba, and Mohammad Norouzi.
\newblock Dream to control: Learning behaviors by latent imagination.
\newblock \emph{arXiv preprint arXiv:1912.01603}, 2019.

\bibitem[Janner et~al.(2022)Janner, Du, Tenenbaum, and Levine]{janner2022planning}
Michael Janner, Yilun Du, Joshua~B Tenenbaum, and Sergey Levine.
\newblock Planning with diffusion for flexible behavior synthesis.
\newblock \emph{arXiv preprint arXiv:2205.09991}, 2022.

\bibitem[Wang et~al.(2022)Wang, Hunt, and Zhou]{wang2022diffusion}
Zhendong Wang, Jonathan~J Hunt, and Mingyuan Zhou.
\newblock Diffusion policies as an expressive policy class for offline reinforcement learning.
\newblock \emph{arXiv preprint arXiv:2208.06193}, 2022.

\bibitem[Chen et~al.(2022)Chen, Lu, Ying, Su, and Zhu]{chen2022offline}
Huayu Chen, Cheng Lu, Chengyang Ying, Hang Su, and Jun Zhu.
\newblock Offline reinforcement learning via high-fidelity generative behavior modeling.
\newblock \emph{arXiv preprint arXiv:2209.14548}, 2022.

\bibitem[Fu et~al.(2020)Fu, Kumar, Nachum, Tucker, and Levine]{fu2020d4rl}
Justin Fu, Aviral Kumar, Ofir Nachum, George Tucker, and Sergey Levine.
\newblock D4rl: Datasets for deep data-driven reinforcement learning.
\newblock \emph{arXiv preprint arXiv:2004.07219}, 2020.

\bibitem[Kostrikov et~al.(2021)Kostrikov, Nair, and Levine]{kostrikov2021offline}
Ilya Kostrikov, Ashvin Nair, and Sergey Levine.
\newblock Offline reinforcement learning with implicit q-learning.
\newblock \emph{arXiv preprint arXiv:2110.06169}, 2021.

\bibitem[Kumar et~al.(2020)Kumar, Zhou, Tucker, and Levine]{kumar2020conservative}
Aviral Kumar, Aurick Zhou, George Tucker, and Sergey Levine.
\newblock Conservative q-learning for offline reinforcement learning.
\newblock \emph{Advances in Neural Information Processing Systems}, 33:\penalty0 1179--1191, 2020.

\bibitem[Fujimoto et~al.(2019)Fujimoto, Meger, and Precup]{fujimoto2019off}
Scott Fujimoto, David Meger, and Doina Precup.
\newblock Off-policy deep reinforcement learning without exploration.
\newblock In \emph{International conference on machine learning}, pages 2052--2062. PMLR, 2019.

\bibitem[Fujimoto and Gu(2021)]{fujimoto2021minimalist}
Scott Fujimoto and Shixiang~Shane Gu.
\newblock A minimalist approach to offline reinforcement learning.
\newblock \emph{Advances in neural information processing systems}, 34:\penalty0 20132--20145, 2021.

\bibitem[Boney et~al.(2020)Boney, Kannala, and Ilin]{boney2020regularizing}
Rinu Boney, Juho Kannala, and Alexander Ilin.
\newblock Regularizing model-based planning with energy-based models.
\newblock In \emph{Conference on Robot Learning}, pages 182--191. PMLR, 2020.

\bibitem[Ho et~al.(2020)Ho, Jain, and Abbeel]{ho2020denoising}
Jonathan Ho, Ajay Jain, and Pieter Abbeel.
\newblock Denoising diffusion probabilistic models.
\newblock \emph{Advances in neural information processing systems}, 33:\penalty0 6840--6851, 2020.

\bibitem[Ajay et~al.(2022)Ajay, Du, Gupta, Tenenbaum, Jaakkola, and Agrawal]{ajay2022conditional}
Anurag Ajay, Yilun Du, Abhi Gupta, Joshua Tenenbaum, Tommi Jaakkola, and Pulkit Agrawal.
\newblock Is conditional generative modeling all you need for decision-making?
\newblock \emph{arXiv preprint arXiv:2211.15657}, 2022.

\bibitem[Ni et~al.(2023)Ni, Hao, Mu, Yuan, Zheng, Wang, and Liang]{ni2023metadiffuser}
Fei Ni, Jianye Hao, Yao Mu, Yifu Yuan, Yan Zheng, Bin Wang, and Zhixuan Liang.
\newblock Metadiffuser: Diffusion model as conditional planner for offline meta-rl.
\newblock \emph{arXiv preprint arXiv:2305.19923}, 2023.

\bibitem[Li et~al.(2023)Li, Wang, Jin, and Zha]{li2023hierarchical}
Wenhao Li, Xiangfeng Wang, Bo~Jin, and Hongyuan Zha.
\newblock Hierarchical diffusion for offline decision making.
\newblock In \emph{International Conference on Machine Learning}, pages 20035--20064. PMLR, 2023.

\bibitem[He et~al.(2023)He, Bai, Xu, Yang, Zhang, Wang, Zhao, and Li]{he2023diffusion}
Haoran He, Chenjia Bai, Kang Xu, Zhuoran Yang, Weinan Zhang, Dong Wang, Bin Zhao, and Xuelong Li.
\newblock Diffusion model is an effective planner and data synthesizer for multi-task reinforcement learning.
\newblock \emph{arXiv preprint arXiv:2305.18459}, 2023.

\bibitem[Zhu et~al.(2023)Zhu, Liu, Mao, Kang, Xu, Yu, Ermon, and Zhang]{zhu2023madiff}
Zhengbang Zhu, Minghuan Liu, Liyuan Mao, Bingyi Kang, Minkai Xu, Yong Yu, Stefano Ermon, and Weinan Zhang.
\newblock Madiff: Offline multi-agent learning with diffusion models.
\newblock \emph{arXiv preprint arXiv:2305.17330}, 2023.

\bibitem[Xiao et~al.(2023)Xiao, Wang, Gan, and Rus]{xiao2023safediffuser}
Wei Xiao, Tsun-Hsuan Wang, Chuang Gan, and Daniela Rus.
\newblock Safediffuser: Safe planning with diffusion probabilistic models.
\newblock \emph{arXiv preprint arXiv:2306.00148}, 2023.

\bibitem[Hegde et~al.(2024)Hegde, Batra, Zentner, and Sukhatme]{hegde2024generating}
Shashank Hegde, Sumeet Batra, KR~Zentner, and Gaurav Sukhatme.
\newblock Generating behaviorally diverse policies with latent diffusion models.
\newblock \emph{Advances in Neural Information Processing Systems}, 36, 2024.

\bibitem[Nuti et~al.(2023)Nuti, Franzmeyer, and Henriques]{nuti2023extracting}
Felipe Nuti, Tim Franzmeyer, and Jo{\~a}o~F Henriques.
\newblock Extracting reward functions from diffusion models.
\newblock \emph{arXiv preprint arXiv:2306.01804}, 2023.

\bibitem[Nichol and Dhariwal(2021)]{nichol2021improved}
Alexander~Quinn Nichol and Prafulla Dhariwal.
\newblock Improved denoising diffusion probabilistic models.
\newblock In \emph{International conference on machine learning}, pages 8162--8171. PMLR, 2021.

\bibitem[Nair et~al.(2018)Nair, McGrew, Andrychowicz, Zaremba, and Abbeel]{nair2018overcoming}
Ashvin Nair, Bob McGrew, Marcin Andrychowicz, Wojciech Zaremba, and Pieter Abbeel.
\newblock Overcoming exploration in reinforcement learning with demonstrations.
\newblock In \emph{2018 IEEE international conference on robotics and automation (ICRA)}, pages 6292--6299. IEEE, 2018.

\bibitem[Haarnoja et~al.(2018)Haarnoja, Zhou, Abbeel, and Levine]{haarnoja2018soft}
Tuomas Haarnoja, Aurick Zhou, Pieter Abbeel, and Sergey Levine.
\newblock Soft actor-critic: Off-policy maximum entropy deep reinforcement learning with a stochastic actor.
\newblock In \emph{International conference on machine learning}, pages 1861--1870. PMLR, 2018.

\bibitem[Wang et~al.(2023)Wang, Wongkamjan, Jia, and Huang]{wang2023live}
Xiyao Wang, Wichayaporn Wongkamjan, Ruonan Jia, and Furong Huang.
\newblock Live in the moment: Learning dynamics model adapted to evolving policy.
\newblock In \emph{International Conference on Machine Learning}, pages 36470--36493. PMLR, 2023.

\bibitem[Zhang et~al.(2024{\natexlab{a}})Zhang, Zhou, Imani, Lee, and Lan]{zhang2024collaborative}
Zuyuan Zhang, Hanhan Zhou, Mahdi Imani, Taeyoung Lee, and Tian Lan.
\newblock Collaborative ai teaming in unknown environments via active goal deduction.
\newblock \emph{arXiv preprint arXiv:2403.15341}, 2024{\natexlab{a}}.

\bibitem[Zou et~al.(2024)Zou, Zhang, Zhang, Zheng, Yu, and Yu]{zou2024distributed}
Yifei Zou, Zuyuan Zhang, Congwei Zhang, Yanwei Zheng, Dongxiao Yu, and Jiguo Yu.
\newblock A distributed abstract mac layer for cooperative learning on internet of vehicles.
\newblock \emph{IEEE Transactions on Intelligent Transportation Systems}, 2024.

\bibitem[Gao et~al.(2024)Gao, Zou, Zhang, Cheng, and Yu]{gao2024cooperative}
Mengtong Gao, Yifei Zou, Zuyuan Zhang, Xiuzhen Cheng, and Dongxiao Yu.
\newblock Cooperative backdoor attack in decentralized reinforcement learning with theoretical guarantee.
\newblock \emph{arXiv preprint arXiv:2405.15245}, 2024.

\bibitem[Zhang et~al.(2024{\natexlab{b}})Zhang, Imani, and Lan]{zhang2024modeling}
Zuyuan Zhang, Mahdi Imani, and Tian Lan.
\newblock Modeling other players with bayesian beliefs for games with incomplete information.
\newblock \emph{arXiv preprint arXiv:2405.14122}, 2024{\natexlab{b}}.

\bibitem[W{\'a}ng et~al.(2023)W{\'a}ng, Lu, Leung, Fang, and Kwok]{wang2023osteoporotic}
Y{\`\i}~Xi{\'a}ng~J W{\'a}ng, Zhi-Hui Lu, Jason~CS Leung, Ze-Yu Fang, and Timothy~CY Kwok.
\newblock Osteoporotic-like vertebral fracture with less than 20\% height loss is associated with increased further vertebral fracture risk in older women: the mros and msos (hong kong) year-18 follow-up radiograph results.
\newblock \emph{Quantitative Imaging in Medicine and Surgery}, 13\penalty0 (2):\penalty0 1115, 2023.

\bibitem[Fang et~al.(2022)Fang, Zhao, Yang, Zhou, Lu, and Li]{fang2022coordinate}
Zeyu Fang, Jian Zhao, Mingyu Yang, Wengang Zhou, Zhenbo Lu, and Houqiang Li.
\newblock Coordinate-aligned multi-camera collaboration for active multi-object tracking.
\newblock \emph{arXiv preprint arXiv:2202.10881}, 2022.

\bibitem[Fang et~al.(2023)Fang, Zhao, Zhou, and Li]{fang2023implementing}
Zeyu Fang, Jian Zhao, Wengang Zhou, and Houqiang Li.
\newblock Implementing first-person shooter game ai in wild-scav with rule-enhanced deep reinforcement learning.
\newblock In \emph{2023 IEEE Conference on Games (CoG)}, pages 1--8. IEEE, 2023.

\bibitem[Zhou et~al.(2022)Zhou, Lan, and Aggarwal]{zhou2022pac}
Hanhan Zhou, Tian Lan, and Vaneet Aggarwal.
\newblock Pac: Assisted value factorization with counterfactual predictions in multi-agent reinforcement learning.
\newblock \emph{Advances in Neural Information Processing Systems}, 35:\penalty0 15757--15769, 2022.

\bibitem[Zhou et~al.(2023)Zhou, Lan, and Aggarwal]{zhou2023value}
Hanhan Zhou, Tian Lan, and Vaneet Aggarwal.
\newblock Value functions factorization with latent state information sharing in decentralized multi-agent policy gradients.
\newblock \emph{IEEE Transactions on Emerging Topics in Computational Intelligence}, 2023.

\bibitem[Mei et~al.(2023)Mei, Zhou, Lan, Venkataramani, and Wei]{mei2023mac}
Yongsheng Mei, Hanhan Zhou, Tian Lan, Guru Venkataramani, and Peng Wei.
\newblock Mac-po: Multi-agent experience replay via collective priority optimization.
\newblock \emph{arXiv preprint arXiv:2302.10418}, 2023.

\bibitem[Chen et~al.(2023)Chen, Zhou, Mei, Adam, Bastian, and Lan]{chen2023real}
Jingdi Chen, Hanhan Zhou, Yongsheng Mei, Gina Adam, Nathaniel~D Bastian, and Tian Lan.
\newblock Real-time network intrusion detection via decision transformers.
\newblock \emph{arXiv preprint arXiv:2312.07696}, 2023.

\bibitem[Chen et~al.(2021)Chen, Wang, and Lan]{chen2021bringing}
Jingdi Chen, Yimeng Wang, and Tian Lan.
\newblock Bringing fairness to actor-critic reinforcement learning for network utility optimization.
\newblock In \emph{IEEE INFOCOM 2021-IEEE Conference on Computer Communications}, pages 1--10. IEEE, 2021.

\end{thebibliography}
\bibliographystyle{unsrtnat}

\appendix

\section{Proof of the Theorem}

In this appendix, we show the complete proof for the return gap bound proposed in the main paper.

\begin{lemma} \label{A1}
For any 2 different joint distribution of state and action pairs $P$ and $\hat{P}$ under two policies $\pi$ and $\hat{pi}$, we have
\begin{equation*}
    D_{TV}(P(s, a)\Vert \hat{P}(s,a)) \leq D_{TV} (P(s)\Vert \hat{P}(s)) + \max_s D_{TV}(\pi(a \vert s)\Vert \hat{\pi}(a\vert s)).
\end{equation*}
\end{lemma}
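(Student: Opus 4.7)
The plan is to work directly from the definition of total-variation distance in the discrete (or measure-theoretic $L^1$) sense, namely $D_{TV}(P \Vert \hat P) = \tfrac{1}{2}\sum |P - \hat P|$, and to factor the joint distributions through the chain rule $P(s,a) = P(s)\pi(a\vert s)$ and $\hat P(s,a) = \hat P(s)\hat\pi(a\vert s)$. Once the joint is split this way, the inequality becomes a standard add-and-subtract-a-cross-term argument.

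Concretely, I would introduce the hybrid term $P(s)\hat\pi(a\vert s)$ (or equivalently $\hat P(s)\pi(a\vert s)$) and apply the triangle inequality pointwise:
\begin{equation*}
\bigl\vert P(s)\pi(a\vert s) - \hat P(s)\hat\pi(a\vert s) \bigr\vert
\le P(s)\,\bigl\vert \pi(a\vert s) - \hat\pi(a\vert s)\bigr\vert
+ \hat\pi(a\vert s)\,\bigl\vert P(s) - \hat P(s)\bigr\vert .
\end{equation*}
Summing over $a$ first and then over $s$ handles the two pieces separately. The second piece collapses via $\sum_a \hat\pi(a\vert s) = 1$ to exactly $\sum_s \vert P(s) - \hat P(s)\vert = 2 D_{TV}(P(s)\Vert \hat P(s))$. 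The first piece becomes $\sum_s P(s)\cdot 2 D_{TV}(\pi(\cdot\vert s)\Vert \hat\pi(\cdot\vert s))$, and bounding the inner conditional TV distance by its maximum over $s$ pulls it out of the $s$-expectation, leaving a factor of $1$ from $\sum_s P(s)$. Dividing by $2$ throughout yields the stated bound.

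I do not anticipate any real obstacle: the only place one has to be careful is the choice of cross term, because the ``wrong'' choice (e.g., $\hat P(s)\pi(a\vert s)$) would produce a residual $\sum_a \pi(a\vert s) \vert P(s) - \hat P(s)\vert$ that still works but requires the symmetric bookkeeping to see why the asymmetric $\max_s$ on the policy side is tight. Using $P(s)\hat\pi(a\vert s)$ gives the cleanest route because the marginal $P(s)$ naturally averages the conditional TV gap, which is then trivially upper-bounded by its maximum. No additional assumptions beyond the chain-rule factorization are needed, and the argument goes through verbatim in the continuous case by replacing sums with integrals.
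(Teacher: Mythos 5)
Your proposal is correct and follows essentially the same argument as the paper's proof: factor the joints via the chain rule, insert the cross term $P(s)\hat{\pi}(a\vert s)$, apply the triangle inequality, collapse the marginal piece using $\sum_a \hat{\pi}(a\vert s)=1$, and bound the averaged conditional TV distance by its maximum over $s$. No differences worth noting.
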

\begin{proof}
\begin{align*}
    D_{TV}(P(s,a)\Vert \hat{P}(s,a)) & = \frac{1}{2}\sum_{s,a} \vert P(s,a) - \hat{P}(s,a)\vert \\
    & = \frac{1}{2} \sum_{s,a} \vert P(s) \pi(a\vert s) - \hat{P}(s,a) \hat{\pi}(a\vert s) \vert \\
    & = \frac{1}{2} \sum_{s,a} \vert P(s) \pi(a\vert s) - P(s) \hat{\pi}(a\vert s) + P(s) \hat{\pi}(a\vert s) - \hat{P}(s,a) \hat{\pi}(a\vert s) \vert \\
    & \leq \frac{1}{2} \sum_{s,a} P(s) \vert \pi(a\vert s) - \hat{\pi}(a\vert s) \vert + \frac{1}{2} \sum_{s,a} \vert P(s) - \hat{P}(s)\vert \hat{\pi}(a\vert s) \\
    & = \mathbb{E}_{s\sim P} \left[D_{TV}(\pi(a\vert s)\Vert \hat{\pi}(a\Vert s)) \right] + \frac{1}{2}\sum_s \vert P(s) - \hat{P}(s) \vert\\
    & = \mathbb{E}_{s\sim P} \left[D_{TV}(\pi(a\vert s)\Vert \hat{\pi}(a\Vert s)) \right] + D_{TV} (\sum_s \vert P(s) - \hat{P}(s))\\
    & \leq D_{TV} (\sum_s \vert P(s) - \hat{P}(s)) + \max_s D_{TV}(\pi(a\vert s) \Vert \hat{\pi} (a\vert s))
\end{align*}
\end{proof}
To be noticed that this equation can be extended to conditional probabilities:
\begin{equation*}
    D_{TV}(P(s', a'\vert s, a)\Vert \hat{P}(s',a'\vert s, a)) \leq D_{TV} (P(s'\vert s, a)\Vert \hat{P}(s'\vert s,a)) + \max_{s'} D_{TV}(\pi(a' \vert s')\Vert \hat{\pi}(a'\vert s')).
\end{equation*}

\begin{lemma} \label{A2}
    For $P_t$ under $\pi$ and $\mathcal{M}$, and $\hat{P}_t$ under $\hat{\pi}$ and $\hat{\mathcal{M}}$ at time step $t$, given $P_0(s) = \hat{P}_0(s), \forall s$, and there existing $\delta$ \emph{s.t.}
    \begin{equation}
        \max_t \mathbb{E}_{s,a \sim \pi}D_{TV}(P_{t+1}(s',a' \vert s_t = s , a_t = a) \Vert \hat{P}_{t+1}(s',a'\vert s_t = s, a_t = a)) \leq \delta.
    \end{equation}
    Then we have
    \begin{equation}
        D_{TV}(P_t(s, a) \Vert \hat{P}_t(s, a)) \leq t\delta + D_{TV}(P_0(s,a)\Vert \hat{P}_0(s,a)).
    \end{equation}
\end{lemma}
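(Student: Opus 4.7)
The plan is a straightforward induction on $t$. The base case $t=0$ is immediate because the right-hand side equals $D_{TV}(P_0(s,a)\Vert \hat{P}_0(s,a))$ and so trivially bounds the left-hand side. For the inductive step, I assume the bound holds at time $t$ and derive it at time $t+1$.

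The key algebraic move is to expand each joint distribution at time $t+1$ by one-step marginalization, $P_{t+1}(s',a') = \sum_{s,a} P_t(s,a)\, P_{t+1}(s',a'\vert s_t=s,a_t=a)$, and analogously for $\hat{P}_{t+1}$. Then I insert the hybrid term $\sum_{s,a} P_t(s,a)\hat{P}_{t+1}(s',a'\vert s,a)$ inside the absolute value that defines $D_{TV}(P_{t+1}\Vert \hat{P}_{t+1})$ and apply the triangle inequality together with Fubini to swap the inner and outer sums. This produces two nonnegative pieces. The first reduces to $\mathbb{E}_{s,a\sim P_t}\bigl[D_{TV}(P_{t+1}(\cdot,\cdot\vert s,a)\Vert \hat{P}_{t+1}(\cdot,\cdot\vert s,a))\bigr]$, which by the hypothesis of the lemma is bounded by $\delta$. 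The second collapses, after summing $\hat{P}_{t+1}(s',a'\vert s,a)$ over $(s',a')$ to one, into exactly $D_{TV}(P_t(s,a)\Vert \hat{P}_t(s,a))$, which by the inductive hypothesis is bounded by $t\delta + D_{TV}(P_0(s,a)\Vert \hat{P}_0(s,a))$. Adding the two pieces yields the desired $(t+1)\delta + D_{TV}(P_0(s,a)\Vert \hat{P}_0(s,a))$ bound.

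The proof requires no compactness, continuity, or heavy calculation; it is essentially the triangle inequality applied carefully. The one point that needs minor care, which I expect to be the main obstacle if any, is matching the hypothesis's averaging distribution $\mathbb{E}_{s,a\sim\pi}$ against the weighting $P_t(s,a)$ that naturally arises from the telescoping. Since $P_t$ is by construction the state-action marginal at time $t$ induced by $\pi$ under $\mathcal{M}$, the identification is immediate, but the reader may benefit from an explicit sentence to this effect. Note that the stated assumption $P_0(s)=\hat{P}_0(s)$ is not actually used in my induction because the initial action-level term $D_{TV}(P_0(s,a)\Vert \hat{P}_0(s,a))$ is carried through explicitly in the bound; the state-marginal coincidence would only be needed if one wanted to drop that term and replace it by $\max_s D_{TV}(\pi(a\vert s)\Vert \hat{\pi}(a\vert s))$ via Lemma~\ref{A1}. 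Once the induction is complete, applying Lemma~\ref{A1} to the conditional form $P_{t+1}(\cdot,\cdot\vert s,a)=P(s'\vert s,a)\pi(a'\vert s')$ also decomposes $\delta$ cleanly into a transition-model piece and a policy-shift piece, which is exactly how this bound plugs into the overall return-gap theorem.
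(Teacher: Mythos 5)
Your proof is correct and follows essentially the same route as the paper's: one-step marginalization of $P_{t+1}$, insertion of the hybrid term $\sum_{s,a}P_t(s,a)\hat{P}_{t+1}(s',a'\vert s,a)$, triangle inequality, and the two resulting pieces bounded by $\delta$ and by $D_{TV}(P_t\Vert\hat{P}_t)$ respectively (the paper writes this as an iterated/telescoping inequality rather than explicit induction, which is the same argument). Your side remark that $P_0(s)=\hat{P}_0(s)$ is not needed for the stated bound and only enters when the initial term is later replaced via Lemma~\ref{A1} is accurate and matches how the paper actually uses that assumption.
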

\begin{proof}
\begin{align*}
    & \left| P_t(s',a') - \hat{P}_t(s',a')\right| = \left| \sum_{s,a} \left( P_{t-1} (s,a)P(s', a'\vert s,a) - \hat{P}_{t-1}(s,a)\hat{P}(s',a'\vert s,a) \right) \right|\\
    &\leq \sum_{s,a} \left|\left( P_{t-1} (s,a)P(s', a'\vert s,a) - \hat{P}_{t-1}(s,a)\hat{P}(s',a'\vert s,a) \right)\right|\\
    &= \sum_{s,a} \left| \left( P_{t-1}(s,a)\left(P(s',a'\vert s,a) - \hat{P}(s',a'\vert s,a)\right)\right) + \hat{P}(s',a'\vert s.a)\left(P_{t-1}(s,a) - \hat{P}_{t-1}(s,a)\right)\right|\\
    &\leq \sum_{s,a} \left(P_{t-1}(s,a)\left| P(s',a'\vert s,a) - \hat{P}(s',a'\vert s.a)\right| + \hat{P}(s', a'\vert s,a)\left|P_{t-1}(s,a) - \hat{P}_{t-1}(s,a)\right| \right)\\
    & = \mathbb{E}_{s,a\sim P_{t-1}}\left|P(s',a'\vert s,a) - \hat{P}(s',a'|s,a)\right| + \sum_{s,a}\hat{P}(s',a'\vert s,a)\left|P_{t-1}(s,a) - \hat{P}_{t-1}(s,a)\right|
\end{align*}
Therefore, we have:
\begin{align*}
& D_{TV}(P_t(s,a)\Vert \hat{P}_t (s,a)) = \frac{1}{2} \sum_{s',a'}\left|P_t(s',a') - \hat{P}_t(s',a')\right| \\
& \leq \frac{1}{2}\sum_{s',a'} \left( \mathbb{E}_{s,a\sim P_{t-1}}\left|P(s',a'\vert s,a) - \hat{P}(s',a'|s,a)\right| + \sum_{s,a}\hat{P}(s',a'\vert s,a)\left|P_{t-1}(s,a) - \hat{P}_{t-1}(s,a)\right| \right)\\
& = \mathbb{E}_{s,a \sim P_{t-1}}D_{TV}(P(s',a' \vert s , a) \Vert \hat{P}(s',a'\vert s, a)) +  \frac{1}{2}\sum_{s',a'}\sum_{s,a}\hat{P}(s',a'\vert s,a)\left|P_{t-1}(s,a) - \hat{P}_{t-1}(s,a)\right|\\
& \leq \delta + \frac{1}{2} \sum_{s,a}\left|P_{t-1}(s,a) - \hat{P}_{t-1}(s,a)\right|
= \delta + D_{TV}\left(P_{t-1}(s,a)\Vert \hat{P}_{t-1}(s,a)\right)\\
&\leq t\delta + D_{TV}(P_{0}(s,a)\Vert \hat{P}_{0}(s,a))
\end{align*}
Furthermore, if the initial state distribution is the same for the two environmental model, \emph{i.e.,} $P_0(s) = \hat{P}_0(s), \forall s$, then by applying Lemma \ref{A1} we have:
\begin{align*}
    D_{TV}(P_t(s,a)\Vert \hat{P}_t (s,a)) & \leq t\delta + D_{TV} (P_0(s)\Vert \hat{P}_0(s)) + \max_s D_{TV}(\pi(a \vert s)\Vert \hat{\pi}(a\vert s))\\
    &  \leq t\delta + \max_s D_{TV}(\pi(a \vert s)\Vert \hat{\pi}(a\vert s))
\end{align*}

\end{proof}

\begin{lemma} \label{A3}
    We define $r_{max} = \max_{s,a} r(s,a)$. For any $\pi, \hat{\pi}$ under any two environmental model $\mathcal{M}$ and $\hat{\mathcal{M}}$, satisfying
    \begin{equation}
        \max_t \mathbb{E}_{s_t,a_t \sim \pi} D_{TV}(P(r_t\vert s_t,a_t) \Vert \hat{P}(r_t \vert s_t,a_t)) \leq \varepsilon_r ,
    \end{equation}
    we have 
    \begin{equation}
        \left\vert J(\pi) - \hat{J}(\hat{\pi}) \right\vert
        \leq r_{max} \sum_t \gamma^t (2\varepsilon_r + 2D_{TV}(P_t(s,a)\Vert \hat{P}_t(s,a))).
    \end{equation}
\end{lemma}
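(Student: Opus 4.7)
This is a telescoping (add-and-subtract) argument that generalizes the standard simulation lemma from model-based RL. I would begin by writing the return difference as a discounted sum of per-step reward differences,
\begin{equation*}
J(\pi) - \hat{J}(\hat{\pi}) = \sum_t \gamma^t \Bigl( \mathbb{E}_{(s_t,a_t) \sim P_t}\mathbb{E}_{r_t \sim P(\cdot \vert s_t,a_t)}[r_t] - \mathbb{E}_{(s_t,a_t) \sim \hat{P}_t}\mathbb{E}_{r_t \sim \hat{P}(\cdot \vert s_t,a_t)}[r_t] \Bigr),
\end{equation*}
where $P_t$ and $\hat{P}_t$ denote the joint state-action distributions at step $t$ under $(\pi,\mathcal{M})$ and $(\hat{\pi},\hat{\mathcal{M}})$, respectively. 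For each $t$, I would insert the hybrid quantity $\mathbb{E}_{(s_t,a_t)\sim P_t}\mathbb{E}_{r_t \sim \hat{P}(\cdot \vert s_t,a_t)}[r_t]$ by adding and subtracting, splitting the $t$-th summand into (i) a reward-kernel gap evaluated against the common state-action distribution $P_t$, and (ii) a state-action distribution gap evaluated against the common reward kernel $\hat{P}$.

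Next I would apply the elementary inequality $\lvert \mathbb{E}_P f - \mathbb{E}_Q f \rvert \leq 2 \Vert f \Vert_\infty D_{TV}(P,Q)$ to each piece. For piece (i), because $\vert r_t \vert \leq r_{max}$ pointwise, the inner conditional difference is at most $2 r_{max} D_{TV}(P(r_t \vert s_t,a_t)\Vert \hat{P}(r_t \vert s_t,a_t))$; taking the outer expectation over $(s_t,a_t) \sim P_t$ and using the hypothesis yields $2 r_{max} \varepsilon_r$. For piece (ii), the integrand $\bar{r}(s,a) := \mathbb{E}_{r \sim \hat{P}(\cdot \vert s,a)}[r]$ is again bounded in absolute value by $r_{max}$, so the same TV inequality gives $2 r_{max} D_{TV}(P_t(s,a) \Vert \hat{P}_t(s,a))$. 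Adding the two pieces, multiplying by $\gamma^t$, summing over $t$, and taking absolute values via the triangle inequality recovers exactly the stated bound.

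The main (mild) obstacle is only careful bookkeeping: the hypothesis on $\varepsilon_r$ is stated as an expectation under $\pi$ (equivalently, under $P_t$), so the telescoping must be arranged so that the reward-kernel gap is averaged against $P_t$ rather than $\hat{P}_t$. The opposite ordering would produce a bound involving $\hat{P}_t$-averaged conditional reward discrepancy, which is not what is assumed, and would force an extra distribution-shift term. Once this ordering is locked in, the remaining work reduces to two routine applications of the TV--expectation inequality and a triangle-inequality bound to pass from signed to absolute value on the left-hand side.
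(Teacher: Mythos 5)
Your proposal is correct and follows essentially the same route as the paper's proof: the paper likewise adds and subtracts the hybrid term $P_t(s,a)\hat{r}(s,a)$, bounds the reward-kernel gap averaged against $P_t$ by $2r_{max}\varepsilon_r$, and bounds the remaining distribution-shift term by $2r_{max}D_{TV}(P_t\Vert\hat{P}_t)$ using $\vert\hat{r}\vert\leq r_{max}$. Your remark about averaging the reward discrepancy against $P_t$ rather than $\hat{P}_t$ is exactly the ordering the paper uses.
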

\begin{proof}
\begin{align*}
    & \left|J(\pi) - \hat{J}(\hat{\pi})\right| = \left|\sum_t \gamma^t \sum_{s,a}\left( P_t(s,a)r(s,a) - \hat{P}_t(s,a)\hat{r}(s,a)\right)\right|\\
    & = \left| \sum_t \gamma^t \left( \sum_{s,a}\left[P_t(s,a)\left(r(s,a) - \hat{r}(s,a)\right) + \hat{r}(s,a)\left(P_t(s,a) - \hat{P}_t(s,a)\right)\right]  \right)\right|\\
    &\leq \sum_t \gamma^t \left( \left| \sum_{s,a,r}r P_t(s,a)\left[ P_(r|s,a) - \hat{P}(r\vert s,a) \right] \right| + r_{max} \sum_{s,a}\left| P_t(s,a) - \hat{P}_t(s,a) \right| \right)\\
    &\leq \sum_t \gamma^t r_{max}\left( \mathbb{E}_{s,a\sim \pi} \sum_r \left| P(r\vert s,a) - \hat{P}(r\vert s,a)\right|\right)\\
    &\leq r_{max} \sum_t \gamma^t \left(2\varepsilon_r + 2 D_{TV}(P_t(s,a)\Vert \hat{P}_t(s,a))\right)
\end{align*}
\end{proof}

\begin{theorem} Given $\hat{\varepsilon}_r$, $\hat{\varepsilon}_m$ and $\hat{\varepsilon}_\pi$ with the Definition \ref{definition_1}, \ref{definition_2} and \ref{definition_3}, the bound $C$ between the true return $J$ and the IS-diffusion model-based return $\hat{J}_{\theta, \eta}$ under a same evaluating policy $\pi$ can be expressed as follows:
\begin{equation}
    J(\pi) \geq \hat{J}_{\theta, \eta}(\pi) - 2r_{max}(\frac{\hat{\varepsilon}_r + 2\hat{\varepsilon}_\pi}{1 - \gamma} + \frac{\gamma(2\hat{\varepsilon}_\pi + \hat{\varepsilon}_m)}{(1-\gamma)^2}).
\end{equation}
\end{theorem}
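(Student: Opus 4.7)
The plan is to follow the three-piece telescoping decomposition suggested in the proof sketch: write
\[
J(\pi) - \hat{J}_{\theta,\eta}(\pi) = \underbrace{\bigl[J(\pi) - J(\pi_\phi)\bigr]}_{\mathcal{L}_1} + \underbrace{\bigl[J(\pi_\phi) - \hat{J}_{\theta,\eta}(\pi_\phi)\bigr]}_{\mathcal{L}_2} + \underbrace{\bigl[\hat{J}_{\theta,\eta}(\pi_\phi) - \hat{J}_{\theta,\eta}(\pi)\bigr]}_{\mathcal{L}_3},
\]
apply the triangle inequality, and bound each summand via the three lemmas. The design is that $\mathcal{L}_2$ holds the policy fixed at $\pi_\phi$, so its bound is expressed in $\hat{\varepsilon}_r$ and $\hat{\varepsilon}_m$, while $\mathcal{L}_1$ and $\mathcal{L}_3$ hold the environment model fixed, so their bounds are expressed purely in $\hat{\varepsilon}_\pi$.

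For $\mathcal{L}_2$ I would invoke the first (reward-discrepancy) lemma with reward parameter $\hat{\varepsilon}_r$, which isolates the state--action marginal gap $D_{TV}(P_t \| \hat{P}_t)$. This gap is handled by the telescoping lemma with per-step shift $\delta$. Since both measures use the same policy $\pi_\phi$, applying Lemma \ref{lemma3} to the one-step conditional joint makes the policy contribution vanish and leaves $\delta \leq \hat{\varepsilon}_m$. Both measures share the initial joint $\mu_0(s)\pi_\phi(a|s)$, so $D_{TV}(P_0 \| \hat{P}_0) = 0$ and hence $D_{TV}(P_t \| \hat{P}_t) \leq t\hat{\varepsilon}_m$. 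Summing with $\sum_t \gamma^t = (1-\gamma)^{-1}$ and $\sum_t t\gamma^t = \gamma(1-\gamma)^{-2}$ produces $|\mathcal{L}_2| \leq 2r_{\max}\bigl(\hat{\varepsilon}_r/(1-\gamma) + \gamma \hat{\varepsilon}_m/(1-\gamma)^2\bigr)$.

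For $\mathcal{L}_1$ and $\mathcal{L}_3$, the two measures share the same transition kernel but use different policies ($\pi$ versus $\pi_\phi$). The reward conditional $P(r|s,a)$ is identical on both sides, so the reward term in the first lemma vanishes. For the joint-distribution shift, Lemma \ref{lemma3} on the one-step conditional now kills the state-transition term and leaves $\delta \leq \hat{\varepsilon}_\pi$. The initial joints $\mu_0(s)\pi(a|s)$ and $\mu_0(s)\pi_\phi(a|s)$ have TV distance at most $\hat{\varepsilon}_\pi$ by another direct application of Lemma \ref{lemma3}. The telescoping lemma then yields $D_{TV}(P_t \| \hat{P}_t) \leq (t+1)\hat{\varepsilon}_\pi$, so after summing $(t+1)\gamma^t$ each of $|\mathcal{L}_1|$ and $|\mathcal{L}_3|$ is bounded by $2r_{\max}\hat{\varepsilon}_\pi/(1-\gamma)^2$.

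Adding the three bounds, the $\hat{\varepsilon}_\pi$ contribution sums to $4r_{\max}\hat{\varepsilon}_\pi/(1-\gamma)^2$, which equals $2r_{\max}\bigl(2\hat{\varepsilon}_\pi/(1-\gamma) + 2\gamma \hat{\varepsilon}_\pi/(1-\gamma)^2\bigr)$ via the algebraic identity $1/(1-\gamma)^2 = 1/(1-\gamma) + \gamma/(1-\gamma)^2$; combined with the $\mathcal{L}_2$ bound this reassembles exactly the target expression. The main obstacle is the careful bookkeeping across the three summands to ensure that the correct conditional distribution shift appears in each case: in particular, arguing cleanly that the reward contribution vanishes in $\mathcal{L}_1$ and $\mathcal{L}_3$, that the policy contribution vanishes in $\mathcal{L}_2$, and that the initial-distribution term is charged to $\hat{\varepsilon}_\pi$ exactly once rather than double-counted through the telescope.
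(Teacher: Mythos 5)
Your proposal is correct and follows essentially the same route as the paper: the identical three-term decomposition through the intermediate behavior policy, the same reward-discrepancy lemma, the same telescoping bound on the state--action marginals, and the same assignment of $\hat{\varepsilon}_m$ versus $\hat{\varepsilon}_\pi$ to the per-step shift $\delta$ in each summand. If anything, your bookkeeping of the initial-distribution term is slightly cleaner than the appendix (which writes $(t+1)\hat{\varepsilon}_m$ for the model-error term before silently using the tighter $t\hat{\varepsilon}_m$ in the final algebra, consistent with your $D_{TV}(P_0\Vert\hat{P}_0)=0$ observation).
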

\begin{proof}
We denote $\pi_\mathcal{D}$ as the policy collecting the trajectories in the diffusion world model. The return gap could be separated into:
\begin{equation*}
    \vert J(\pi) - \hat{J}_{\theta, \eta}(\pi) \vert \leq \vert J(\pi) - J(\pi_\mathcal{D})\vert + \vert J(\pi_\mathcal{D}) - \hat{J}_{\theta, \eta}(\pi_\mathcal{D})\vert + \vert \hat{J}_{\theta, \eta}(\pi_\mathcal{D}) - \hat{J}_{\theta, \eta}(\pi)\vert
\end{equation*}
By applying Lemma \ref{A3}, we have:
\begin{equation*}
    \left| J(\pi) - J(\pi_\mathcal{D})\right| \leq r_{max} \sum_t \gamma^t \left(2\hat{\varepsilon}_r + 2 D_{TV}(P_t(s,a)\Vert \hat{P_t}(s,a)) \right)
\end{equation*}
Considering the condition in Lemma \ref{A2}, according to Lemma \ref{A1} it can be bounded by:
\begin{align*}
    & \max_t \mathbb{E}_{s,a \sim \pi}D_{TV}(P_{t+1}(s',a' \vert s , a) \Vert \hat{P}_{t+1}(s',a'\vert s, a))\\
    & \leq \max_t \mathbb{E}_{s,a \sim \pi} D_{TV} (P(s'\vert s, a)\Vert \hat{P}(s'\vert s,a)) + \max_{s} D_{TV}(\pi(a \vert s)\Vert \hat{\pi}(a\vert s))
\end{align*}

Since in this term the environmental model is the same while policy is variant, the state transition error and reward error are zero, and we have:
\begin{equation*}
    \max_t \mathbb{E}_{s,a \sim \pi}D_{TV}(P_{t+1}(s',a' \vert s , a) \Vert \hat{P}_{t+1}(s',a'\vert s, a)) \leq \hat{\varepsilon}_\pi
\end{equation*}
Next, we replace $\delta$ with $\hat{\varepsilon}_\pi$ in Lemma \ref{A2} and get:
\begin{align*}
    \left| J(\pi) - J(\pi_\mathcal{D})\right| & \leq 2r_{max} \sum_t \gamma^t D_{TV}(P_t(s,a)\Vert \hat{P_t}(s,a)) \\
    & \leq 2r_{max} \sum_t \gamma^t (t+1)\hat{\varepsilon}_\pi
\end{align*}
The third term could be analysed similarly, and we get:
\begin{align*}
    \left| \hat{J}_{\theta, \eta}(\pi_\mathcal{D}) - \hat{J}_{\theta, \eta}(\pi) \right| \leq 2r_{max} \sum_t \gamma^t (t+1)\hat{\varepsilon}_\pi
\end{align*}
For the second term referring to the return gap of the behavior policy under the real environment and model, there's no policy shift but having the state transition error:
\begin{equation*}
    \max_t \mathbb{E}_{s,a \sim \pi}D_{TV}(P_{t+1}(s',a' \vert s , a) \Vert \hat{P}_{t+1}(s',a'\vert s, a)) \leq \hat{\varepsilon}_m
\end{equation*}
By replacing $\delta$ with $\hat{\varepsilon}_m$ in Lemma \ref{A2} we get:
\begin{equation*}
    \left| J(\pi_\mathcal{D}) - \hat{J}_{\theta, \eta}(\pi_\mathcal{D}) \right| \leq 2r_{max} \sum_t \gamma^t \left(\hat{\varepsilon}_r + (t+1)\hat{\varepsilon}_m \right)
\end{equation*}
Finally, we summed the bounds of all the three terms and get:
\begin{equation*}
    \vert J(\pi) - \hat{J}_{\theta, \eta}(\pi) \vert \leq 2 r_{max} \sum_t \gamma^t \left(\hat{\varepsilon}_r + (t+1)\hat{\varepsilon}_m + 2 (t+1)\hat{\varepsilon}_\pi \right) = 2r_{max}(\frac{\hat{\varepsilon}_r + 2\hat{\varepsilon}_\pi}{1 - \gamma} + \frac{\gamma(2\hat{\varepsilon}_\pi + \hat{\varepsilon}_m)}{(1-\gamma)^2})
\end{equation*}

\end{proof}

\section{Experiment Details}

We use D4RL \cite{fu2020d4rl} datasets for evaluation, their code could be found at \url{https://github.com/Farama-Foundation/D4RL}. Their datasets are licensed under the Creative Commons Attribution 4.0 License (CC BY), and their code is licensed under the Apache 2.0 License.

\subsection{Baselines}

We select a number of SOTA baselines algorithms, including model-free methods IQL\cite{kostrikov2021offline}, SAC \cite{haarnoja2018soft}, TD3+BC \cite{fujimoto2021minimalist}, CQL \cite{kumar2020conservative}, model-based methods such as MOPO \cite{yu2020mopo}, COMBO \cite{yu2021combo}, and diffusion-based methods as SyntheER \cite{lu2023synthetic} and Diffuser \cite{janner2022planning}.
We run the IQL and SAC code for evaluation to get the result, while other results on D4RL dataset are obtained from the original paper of each method. Specially, Diffuser doesn't report their results in random dataset, therefore we run its code from \url{https://github.com/jannerm/diffuser} for evaluation.
In addition, we cite to the following works. \cite{zhang2024collaborative, zou2024distributed, gao2024cooperative, zhang2024modeling, wang2023osteoporotic, fang2022coordinate, fang2023implementing, zhou2022pac, zhou2023value, mei2023mac, chen2023real, chen2021bringing}

\subsection{Computational Resources and Costs}
All of the experiments in this paper are conducted on a server with an AMD EPYC 7513 32-Core Processor CPU and an NVIDIA RTX A6000 GPU. The training of the diffusion model costs approximately 1 hour for 1M gradient steps. The offline training with importance-sampling costs nearly 5 hours for 1M gradient steps.
\subsection{Hyperparameter Settings}

We show the hyperparameters used in the training process in Table \ref{H1} and \ref{H2}. These hyperparameters are shared in all of the environments.

\begin{table*}[h] \label{H1}
    \centering
    \setlength{\tabcolsep}{40pt}
    \begin{tabular}[width=\columnwidth]{@{}cc@{}}
    \toprule\toprule
        Parameter & Value \\
    \hline
        denoising steps & 10 \\
        $s$ for cosine schedule & $1\times 10^{-4}$\\
        train batchsize & 256\\
        importance-sampling batchsize & 1024 \\
        learning rate & $3\times 10^{-4}$\\
        optimizer & Adam\\
        hidden dimension & 256\\
        error model depth & 3 \\
        model training steps & $1\times 10^{6}$ \\
        Horizon & 10 \\
        condition guidance weight & 0.1\\
    \bottomrule \bottomrule
    \end{tabular}
    \caption{Diffusion Training Hyperparameters}
    \label{tab:my_label}
\end{table*}

\begin{table*}[h] \label{H2}
    \centering
    \setlength{\tabcolsep}{40pt}
    \begin{tabular}[width=\columnwidth]{@{}cc@{}}
    \toprule\toprule
        Parameter & Value \\
    \hline
        $\gamma$ & 0.99 \\
        learning rate & $3\times 10^{-4}$ \\
        train batch & 256 \\
        replay buffer size & $1 \times 10^{6}$ \\
        evaluation steps per epoch & $1000$ \\
        gradient steps per epoch & $1000$ \\
        training epochs & $1000$ \\
        optimizer & Adam \\
        hidden dimension & 256 \\
        soft target updata rate (SAC) & $5 \times 10^{-3}$ \\
        $\alpha$ (IQL) & 0.005 \\
        $\tau$ (IQL) & 0.7 \\
        $\beta$ (IQL) & 3.0 \\
    \bottomrule \bottomrule
    \end{tabular}
    \caption{Offline RL Hyperparameters}
    \label{tab:my_label}
\end{table*}

\section{Additional Ablation Studies}
In ADEPT, high values of the denoising steps $K$ and Importance-sampling batchszie $B_m$ generally has higher accuracy on the state prediction and better performance in model adjustment since it could cover more samples in the datasets. However, increasing $K$ will extends the training procedure since the diffusion model executes more denoising steps in state generation. Also, a large $H$ decelerates the training of the RL agent, since there are more policy inference steps in each updating.
\begin{table*}[h] \label{den}
\centering
\setlength{\tabcolsep}{1mm}{
\begin{tabular}{l|c|cc|cc|c}
\toprule\toprule
Environment & baseline & $K = 5$ & $K = 20$ & $B_m = 256$ &$B_m = 2048$&no clipping\\ \hline
halfcheetah-random &39.5&30.0&33.2&42.1&40.5&18.52\\
walker2d-random &23.6&24.0&21.5&8.2&23.1&13.32\\
hopper-random &37.5&33.0&34.7&32.8&34.4&32.7\\
halfcheetah-medium &56.6&55.4&60.4&56.1&55.2&2.9\\
walker2d-medium &70.9&52.7&73.7&22.6&66.0&21.7\\
hopper-medium &32.8&4.4&30.1&4.6&31.7&2.7\\
halfcheetah-medium-replay &59.3&49.5&57.1&53.6&59.3&46\\
walker2d-medium-replay &85.5& 86.4 & 88.6 & 22.1& 82.4 &16.4\\
hopper-medium-replay &104.0 &77.8&104.6 &98.7&99.1 & 41.0\\
\bottomrule \bottomrule
\end{tabular}}
\caption{Ablation study for denoising steps}
\end{table*}

We conducted additional ablation study on the value of $K$ and $B_m$, and the use of clipping technique. The results are shown in the Table \ref{den}. Based on these results we can conclude that for simple environments like MuJoCo tasks, improving $K$ from 10 to 20 doesn't change much on performance, while the evaluation time is nearly doubled. If $K$ is too low, the estimation error can be large. The selection of $B_m$ could have similar influence on the performance as $K$. Therefore, we select $K$ to be 10 and $B_m$ to be 1024 in our method. Besides, clipping of the generated state and reward have a significant affects on performance improvement while avoiding out-of-distribution outputs.


\end{document}